\apptocmd{\UrlBreaks}{\do\f\do\m}{}{}
\newtheorem{definition}{Definition}
\newtheorem{proposition}{Proposition}
\newtheorem{lemma}{Lemma}
\begin{document}

\title{Likelihood-based Sensor Calibration using Affine Transformation}

\author{Rüdiger~Machhamer, Lejla~Begic~Fazlic, Eray~Guven, David~Junk,  Gunes~Karabulut~Kurt, \IEEEmembership{Senior Member, IEEE}, Stefan~Naumann, Stephan~Didas, Klaus-Uwe~Gollmer, Ralph~Bergmann, Ingo~J.~Timm, and \IEEEauthorblockN{Guido~Dartmann, \IEEEmembership{Senior Member, IEEE}\\
This work has been published by IEEE under DOI: 10.1109/JSEN.2023.3341503.}

\thanks{This project was partly funded by the German Federal Ministry of Food and Agriculture (BMEL) projects "KI-Pilot" under Grant 2820KI001, and "PINOT" under Grant 28DK107G20, and the German Federal Ministry for Economic Affairs and Climate Action (BMWK) project "EASY" under Grant 01MD22002D, and the German Federal Ministry for the Environment, Nature Conservation, Nuclear Safety, and Consumer Protection (BMUV) project "KIRA" under Grant 67KI32013B, and the Ministry for Science and Health of Rhineland-Palatinate (MWG) as part of the research training group (Forschungskolleg) "AI-CPPS".}
\thanks{Corresponding author is R.~Machhamer (e-mail: iss@umwelt-campus.de). R.~Machhamer, L.~B.~Fazlic, D.~Junk, S.~Naumann, S.~Didas, K.-U.~Gollmer, and G.~Dartmann are with the Institute for Software Systems (ISS), Trier University of Applied Sciences, Environmental Campus Birkenfeld, Germany.} 
\thanks{E.~Guven and G.~Karabulut Kurt are with the Poly-Grames Research Center, Polytechnique Montr\'eal, Canada.}
\thanks{R.~Bergmann and I.~J.~Timm are with Trier University, Germany. \protect\\ \\ }
}% <-this % stops a space

\maketitle

\begin{abstract}
An important task in the field of sensor technology is the efficient implementation of adaptation procedures of measurements from one sensor to another sensor of identical design. One idea is to use the estimation of an affine transformation between different systems, which can be improved by the knowledge of experts. This paper presents an improved solution from Glacier Research that was published back in 1973. The results demonstrate the adaptability of this solution for various applications, including software calibration of sensors, implementation of expert-based adaptation, and paving the way for future advancements such as distributed learning methods. One idea here is to use the knowledge of experts for estimating an affine transformation between different systems. We evaluate our research with simulations and also with real measured data of a multi-sensor board with 8 identical sensors. Both data set and evaluation script are provided for download. The results show an improvement for both the simulation and the experiments with real data.
\end{abstract}

\begin{IEEEkeywords}
Sensor Adaptation, expert supported learning, distributed learning, transformation of sensors.
\end{IEEEkeywords}

\section{Introduction}\label{cha:Introduction}
\IEEEPARstart{I}{n} Internet of things applications, as machine learning for sensor systems, there are often systems with similar behavior, e.g. two sensors measuring the same data or the same machine in different process settings. In this context, the transformation between Euclidean data spaces of two systems can be described by an affine transformation. Figure~\ref{fig:Intro2} illustrates the concept of expert-assisted learning. We assume that experts know estimates of several example data points in both systems and can associate some data points of a similar machine with these measurements/estimates. Using these noisy measurements, we want to learn the transformation from system~\(1\) to system~\(2\). Many examples of this exist in practice, e.g. drifts and transformations of identically constructed sensors. These sensors are subject to non-identical fabrication defects. That is, in feature space, the data of two sensors are shifted or slightly rotated. 

A common practice to align the data for e.g. machine learning purposes is feature-wise normalization. Figure~\ref{fig:Intro1a} shows raw measurement values from an example application of an artificial nose analyzing apple juice during a heating process using 8 sensors. The black and blue dots show the data of the relative resistance \([\frac{\Omega}{\Omega}]\) values from sensor~1 and sensor~2 at a sensor plate temperature of \(200\)~°C and \(400\)~°C. The red dots represent the estimation for sensor~\(2\), applying the affine transformation (AT) on the data of sensor~\(1\). The data collection process is described in detail in section \ref{RealDataSimulation}.
Figure~\ref{fig:Intro1b} shows the feature-wise normalized values. It can be seen that normalization already improves the similarity between the data spaces and that the data spaces can be further aligned if the transformation is applied between the two data spaces. Data measured with one sensor can be mapped to data in the data space of the other sensor. Similar practices can be used, to adjust built-in bias in sensors or to perform software-based maintenance for drifts on aged or poisoned sensors.
%%%%%%%%%%%%%%%%%%%%%%%%%%%%%%%%%%%%%%%%%%%%%%%%%%%%%%%%%%%%%%
\begin{figure}[!t]
    \centering
    \includegraphics[scale=0.5]{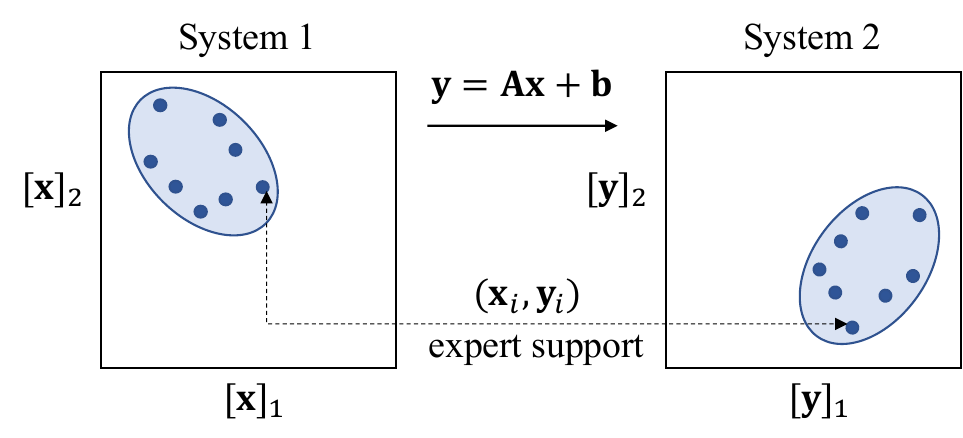}
         \caption{Concept of expert-based learning. Some model instances from sensor~1 are matched by experts with their representatives in sensor~2, the remaining instances from sensor~1 are calculated using the estimated AT, or vice versa. \vspace{-1em}}
         \label{fig:Intro2}
\end{figure}
%%%%%%%%%%%%%%%%%%%%%%%%%%%%%%%%%%%%%%%%%%%%%%%%%%%%%%%%%%%%%%
By combining these two methods, we lay the foundation for improving transformation estimation through expert knowledge. This can be used in distributed learning systems by checking the computed transformation between similar systems by an expert, and iteratively triggering a recomputation or further adaptation of the transformation in case of incorrect predictions. The given example of 8 sensors measuring the same fluid, will provide similar data as if 8 sensors were measuring 8 similar fluids at different locations, which would correspond to a distributed learning use case.

In this paper, we present an approach to estimate the transformation between two data spaces by an AT. Referring to the example with different sensors, an expert can perform the measurement with all sensors under laboratory conditions. Then these measurements are used to estimate the transformation between the two sensors (systems). The basis of this approach is the problem of estimating an AT.
%%%%%%%%%%%%%%%%%%%%%%%%%%%%%%%%%%%%%%%%%%%%%%%%%%%%%%%%%%%%%%
\begin{figure}
\centering
\subfloat[Raw measurement data of an experiment with an electronic nose.\label{fig:Intro1a}]{\includegraphics[width=0.45\textwidth]{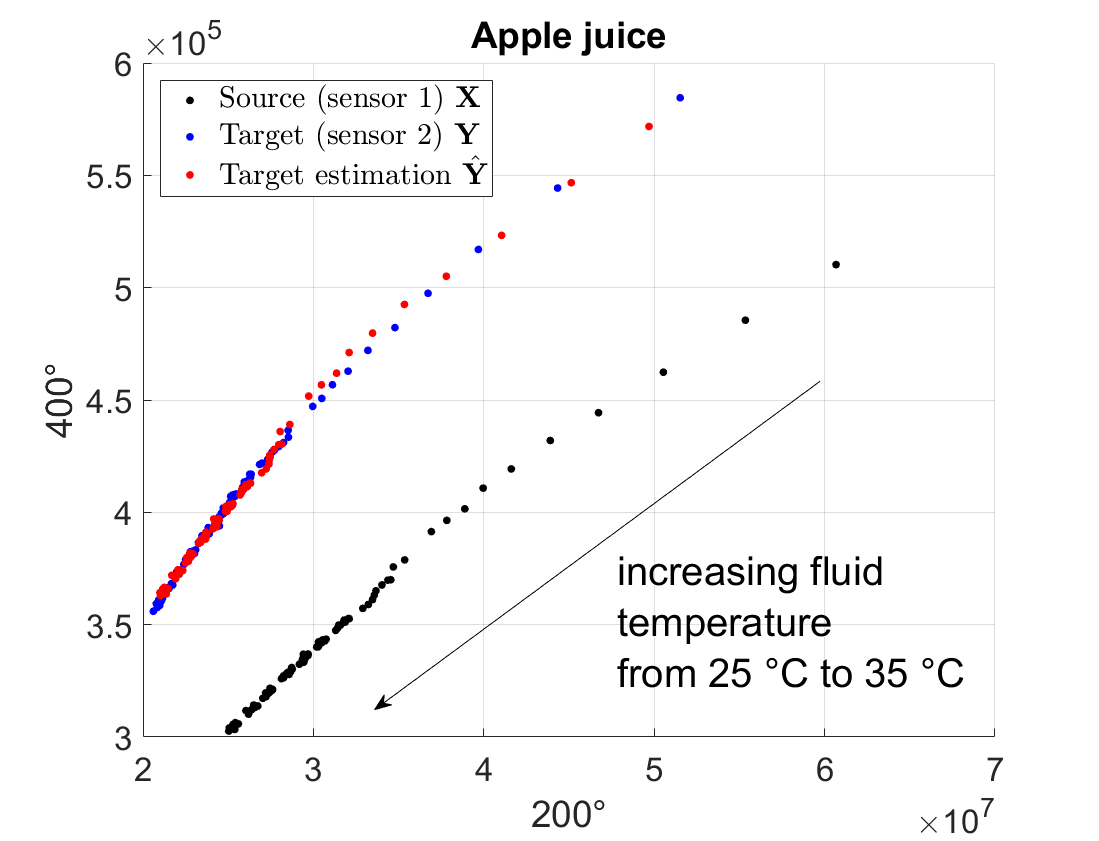}}%

\subfloat[Normalized measurement data of an experiment with an electronic nose.\label{fig:Intro1b}]{\includegraphics[width=0.45\textwidth]{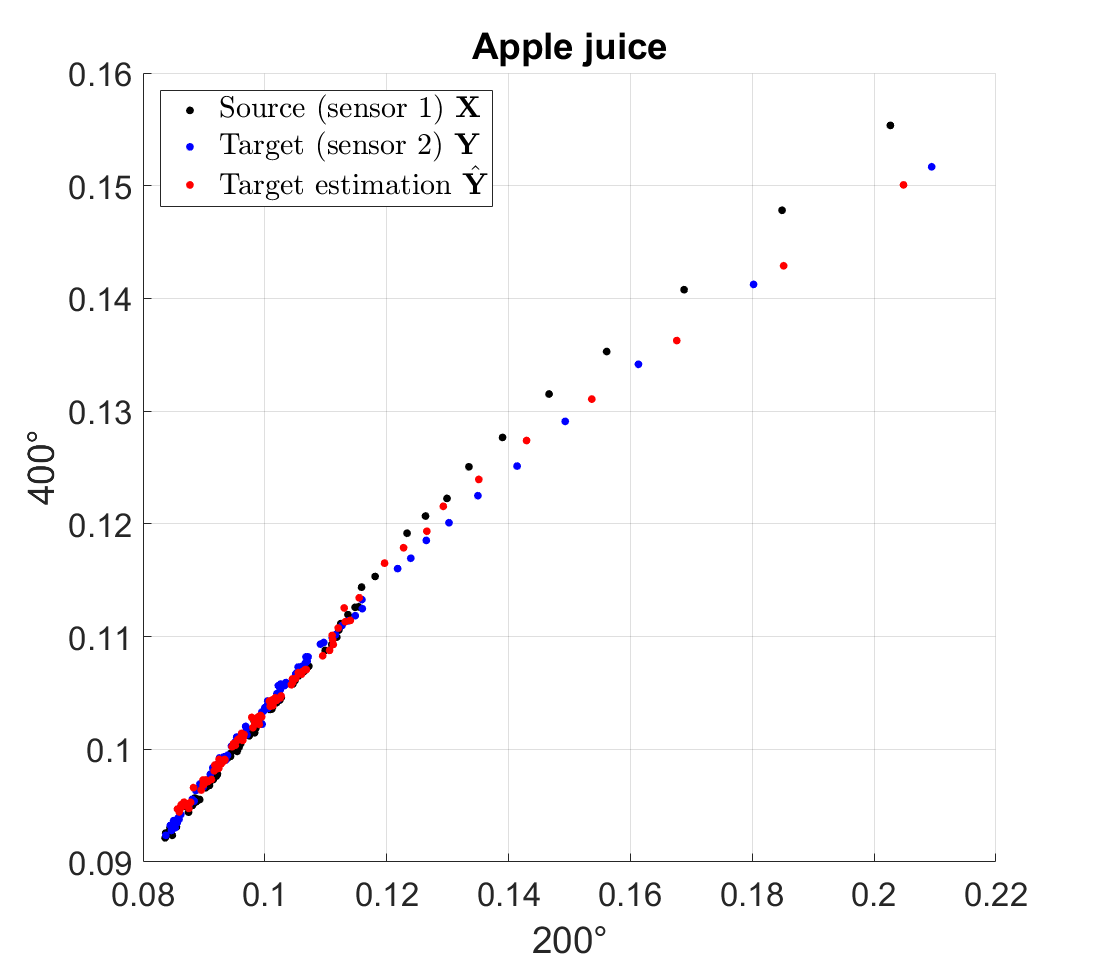}}%

\caption{Comparison of (a) raw and (b) feature-wise normalized data.}\label{fig:Intro1}
\end{figure}
%%%%%%%%%%%%%%%%%%%%%%%%%%%%%%%%%%%%%%%%%%%%%%%%%%%%%%%%%%%%%%

Inspired by a statistical problem in glaciology, Gleser and Watson~\cite{IEEEhowto:Gleser} obtained maximum likelihood estimators (MLE) of the unknown features and an AT. As a continuation of this research, the authors in \cite{IEEEhowto:Gleser} showed that a MLE for the unknown parameter exists, and gained MLE for the model parameters in closed form.
%%%%%%%%%%%%%%%%%%%%%%%%%%%%%%%%%%%%%%%%%%%%%%%%%%%%%%%%%%%%%%

This paper is organized as follows: in section~\ref{SystemModel}, we derive the system model and formulate the estimation problem of the AT. In section~\ref{Algorithms}, we present the algorithms, which we in part have selected from the literature \cite{IEEEhowto:Gleser}. Section~\ref{Gleser} summarizes the solution of Gleser and Watson. We show in section~\ref{Regression} that the stationary points of the parameters of the transformation lead to similar results as the least squares solution, which can be combined with Gleser and Watson~to provide better estimation of the AT. Section~\ref{Results} summarizes the simulation results based on Monte Carlo simulations and shows the results in a real multi-sensor setup.
%%%%%%%%%%%%%%%%%%%%%%%%%%%%%%%%%%%%%%%%%%%%%%%%%%%%%%%%%%%%%%

\noindent Our main contribution is outlined as follows:
\begin{enumerate}[leftmargin=*]
    \item We show that the results of an old but fundamental work~\cite{IEEEhowto:Gleser} in glaciology can be used for the estimation of an AT in expert-supported learning. In \cite{IEEEhowto:Gleser}, the authors derive a solution for the estimation of a linear transformation. Their solution allows to estimate the data in system 1, variance and the transformation. By using the well-known augmented matrix~\cite{Howto:Affine}, this result can be easily improved by denoising using the eigenvectors and extended to an AT.
    \item Furthermore, the authors already showed that a simple solution directly follows from the multivariate regression analysis. We are interested in the estimation of the AT and we additionally derive the gradient of the MLE to get a stationary point of the parameters of the transformation, which leads to a similar simplificated \emph{least square solution} as in \cite{IEEEhowto:Gleser}.
    \item We prove that both solutions are connected by an eigenvalue decomposition. By simulation, we show that a combination of both solutions provides better estimates for computing the parameters of the AT compared to the original approach from Gleser and Watson~\cite{IEEEhowto:Gleser}.
    \item We also show an experiment with a multi-sensor Bosch BME688 development kit board (Fig.~\ref{fig:bme688}) to evaluate the estimation error of the different methods and provide the example dataset and evaluation code.
    \item Finally, we briefly present an idea for a novel concept for distributed learning using transformation with the support of expert knowledge.
\end{enumerate}

%%%%%%%%%%%%%%%%%%%%%%%%%%%%%%%%%%%%%%%%%%%%%%%%%%%%%%%%%%%%%%
\subsection{Related Work}
\noindent More general approach for the MLE procedure  for linear model with errors in variables is proposed by authors in \cite{IEEEhowto:Rock,IEEEhowto:Gleser1}. In~\cite{IEEEhowto:Chan}, the relationship between maximum likelihood and generalize least squares approaches are examined and some of the results when estimating in \emph{error in variables} models are extended. The linear transformations trained in maximum likelihood sense on adaptation data for hidden Markov model based speech recognition is researched in \cite{IEEEhowto:Gales,IEEEhowto:Psutka}. The authors in \cite{IEEEhowto:Balakrishnama} discuss and compare the optimal linear estimation and modified MLE, where they show the better results for modified MLE from the point of bias and means square error in comparison with optimal linear model. The maximum likelihood method is used in regression analysis of linear transformation model and efficient and consistent estimator is developed for  different data case of cohorts \cite{IEEEhowto:Yao,IEEEhowto:Zeng}. MLE optical flow model in monocular optical flow field is developed and presented by authors in \cite{IEEEhowto:Liu}. 

Domain adaptation using maximum likelihood linear transformation for speaker verification is presented by authors in \cite{IEEEhowto:Misra,IEEEhowto:Wang}. The authors in \cite{IEEEhowto:Xingwei} investigate the transformed linear regression with non-normal error distributions where they develop a MLE approach and provide the almost optimal conditions on the error distribution. The authors in \cite{IEEEhowto:Chen}, by using Student's t-distribution noise model, introduce a distributed maximum-likelihood based state estimation approach in domain of power systems. The authors in~\cite{IEEEhowto:Reisizadeh} propose a Distributed Learning framework robust to affine distribution shifts (FLRA) for non-i.i.d. personalization to implement efficient fistributed learning to generate an averaged global model using ATs. In~\cite{IEEEhowto:Faridee}, AT parameters are used to analyze the domain heterogeneity for human activity recognition. The research by \cite{Bib:Vito} introduces a cost-effective calibration method for Low Cost Air Quality Monitoring Systems, which ensures accurate measurements for widespread deployment. This approach reduces calibration costs and enhances the accessibility of precise readings, making it suitable for various IoT Air Quality monitoring devices. The authors in \cite{Bib:Zhang} propose TDACNN, a target-domain-free domain adaptation convolutional neural network designed to mitigate the effects of ill-posed gas sensor drift in E-nose applications. The introduction of an additive angular margin softmax loss during training and the use of the MMD-based ensemble method contribute to improved feature generalization and interclass distance enlargement without the need for participation of target domain data. The authors in \cite{Bib:Fonollosa} built five twin sensing units with eight MOX gas sensors each, exploring sensor tolerance and the impact of sensor mismatch, and they demonstrated the effectiveness of the Direct Standardization (DS) transformation for calibration transfer, addressing drift and variability issues. Tao~\emph{et al.}~\cite{Bib:Yang}~propose a domain correction based on the kernel transformation (DCKT) for drift compensation. 
They show that transformation can lead to better classification results in drift-prone sensor setups by modifying source and target domain. Authors in \cite{Bib:Song,Bib:Chong} addresses flaws in Maximum Likehood Calibration Array (MLCA), proposing a modified cost function and emphasizing improved convergence rates, while our proposed approach focuses on expert-assisted learning between systems, combining direct estimation with a solution from Gleser and Watson.
%%%%%%%%%%%%%%%%%%%%%%%%%%%%%%%%%%%%%%%%%%%%%%%%%%%%%%%%%%%%%%
\begin{figure}
    \centering
    \includegraphics[scale=0.14]{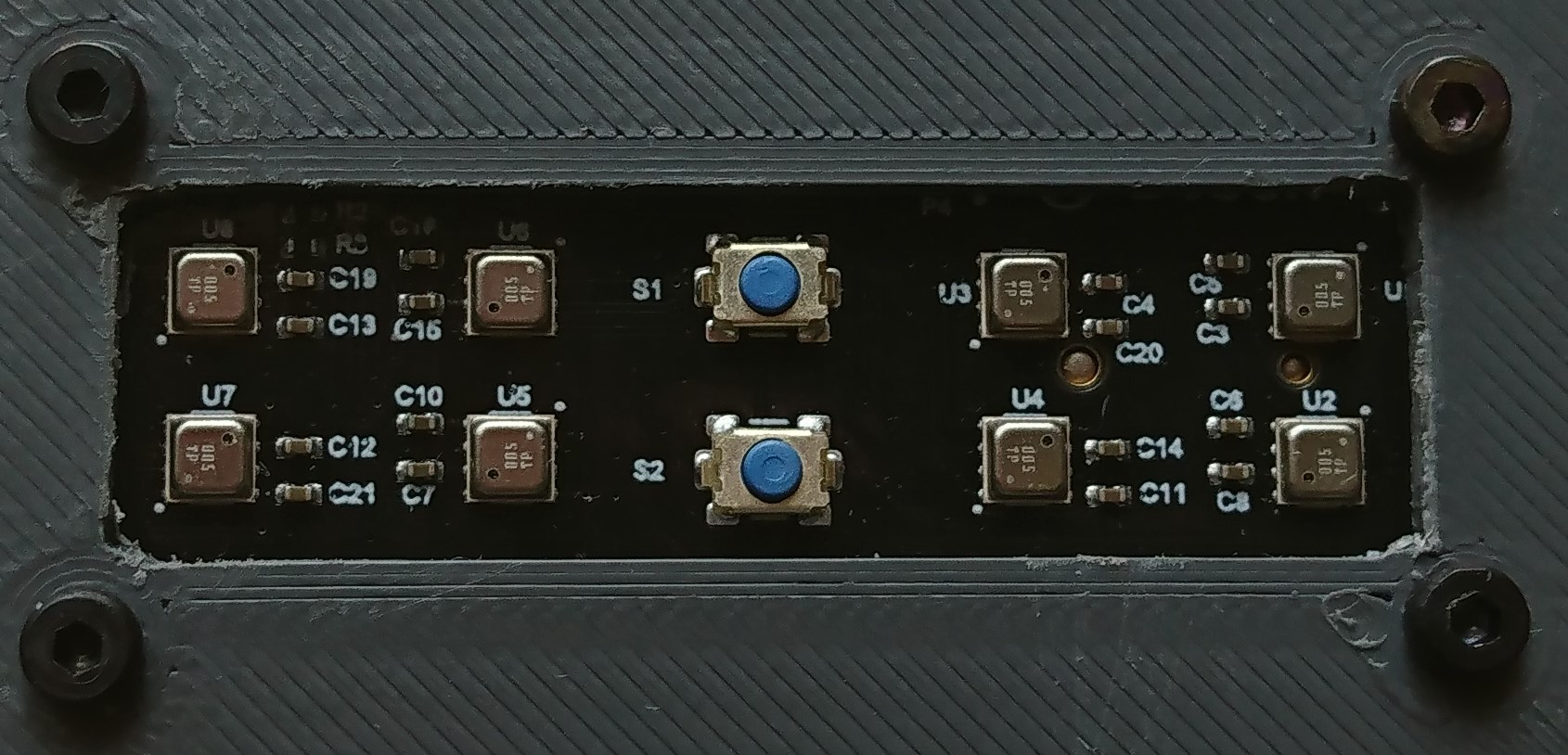}
    \caption{Multisensor board of the Bosch BME688 development kit, which provides 8 identical constructed sensors \cite{Bib:Bosch}.}
    \label{fig:bme688}
\end{figure}
%%%%%%%%%%%%%%%%%%%%%%%%%%%%%%%%%%%%%%%%%%%%%%%%%%%%%%%%%%%%%%
\section{System Model}\label{SystemModel}
\noindent In this section, we define the mathematical basis in order to develop the
corresponding algorithms in Section~\ref{Algorithms}. Fig.~\ref{System1} shows an example of the transformation for the 2-dimensional case. This figure presents a simulation of the data vectors \(\mathbf{v}\in\mathbb{R}^2\) in an Euclidean space. The black data points are the measured points \(\mathbf{X}\) from system~\(1\) and the red data points \(\boldsymbol{\Theta}\) are the estimation of these data points. The green data points \(\mathbf{Y}\) represent the (noisy) transformation of the black data points \(\mathbf{X}\) into system~\(2\) and the blue data points \(\hat{\mathbf{Y}}\) are the origin estimation of this transformed data.
%%%%%%%%%%%%%%%%%%%%%%%%%%%%%%%%%%%%%%%%%%%%%%%%%%%%%%%%%%%%%%
As mentioned in \cite{IEEEhowto:Gleser}, we consider two different systems in which the same features are measured. A single measurement data vector of system~\(1\) is denoted by \(\mathbf{x}_i\) and the single data measurement vector of system~\(2\) by \(\mathbf{y}_i\) respectively, where \(i\in \{1,...,n\}\). Let \(\mathbf{x}_i \in \mathbb{R}^q\) and \(\mathbf{y}_i \in \mathbb{R}^q\), then the data vectors from system~\(1\) are mapped to the data vectors in system~\(2\) by the transformation \(\mathbf{y}_i=\mathrm{T}(\mathbf{x}_i)\). 

We can summarize in the following definition:
%%%%%%%%%%%%%%%%%%%%%%%%%%%%%%%%%%%%%%%%%%%%%%%%%%%%%%%%%%%%%%
\begin{figure}
     \centering
      \includegraphics[scale=0.3]{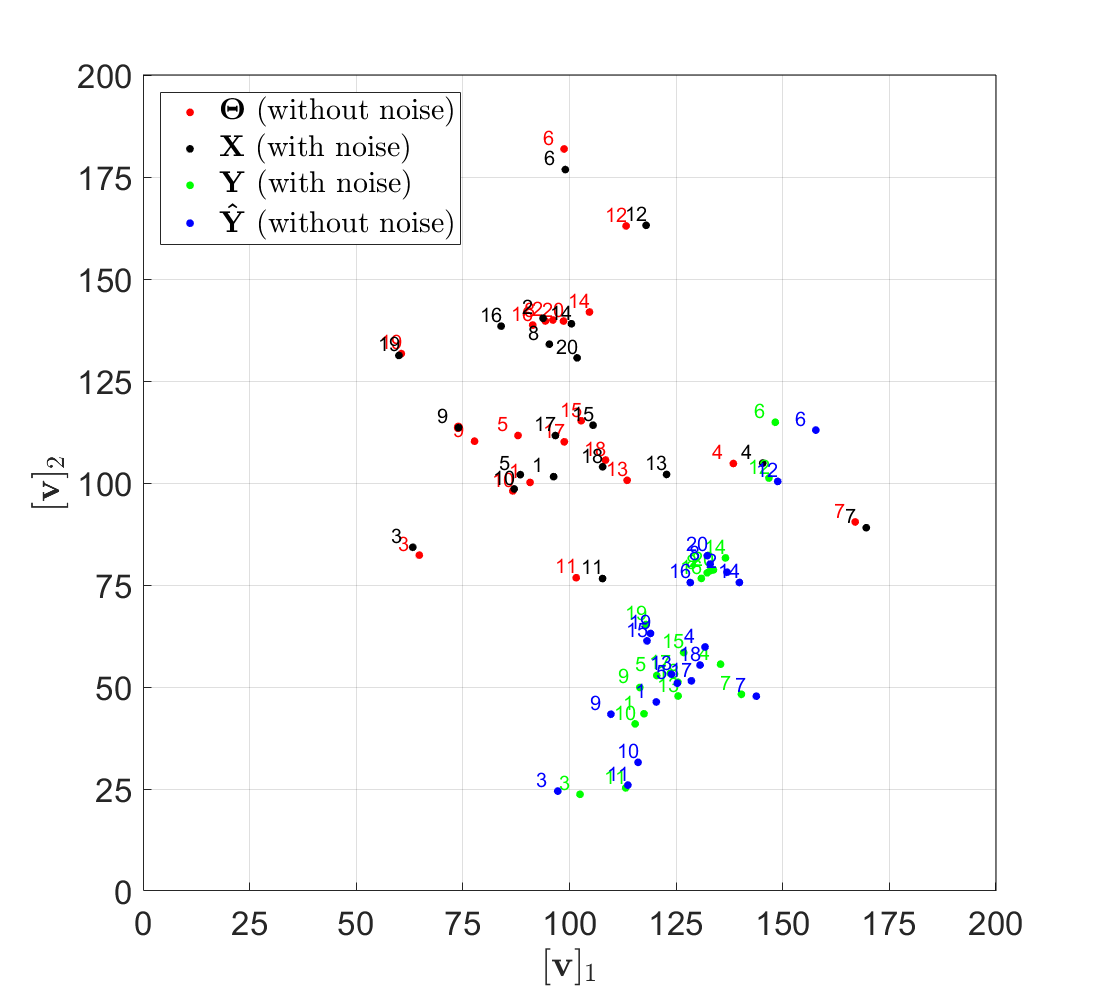}
         \caption{Simulated data of~\(\mathbf{X}\),~\(\boldsymbol{\Theta}\), \(\mathbf{Y}\) and \(\hat{\mathbf{Y}}\). The \(x\)-axis denotes the first component of a data vector \(\mathbf{v}\) and the \(y\)-axis the second component.}
         \label{System1}
 \end{figure}
%%%%%%%%%%%%%%%%%%%%%%%%%%%%%%%%%%%%%%%%%%%%%%%%%%%%%%%%%%%%%%
\begin{definition}\label{Transformation}
Affine Transformation -- Let \(\mathbf{x}\in \mathbb{R}^q\), \(\mathbf{A}\in \mathbb{R}^{q \times q}\) and \(\mathbf{b}\in\mathbb{R}^q\), the AT is given by \(\mathrm{T}(\mathbf{x})= \mathbf{A} \mathbf{x} + \mathbf{b}\). 
\end{definition}
%%%%%%%%%%%%%%%%%%%%%%%%%%%%%%%%%%%%%%%%%%%%%%%%%%%%%%%%%%%%%%
We assume that the measurement in system~\(1\) is noisy. The measurement function in system~\(1\) is given by~(\ref{measurement1}):
%%%%%%%%%%%%%%%%%%%%%%%%%%%%%%%%%%%%%%%%%%%%%%%%%%%%%%%%%%%%%%
\begin{equation}\label{measurement1}
    \mathbf{x}_i=\boldsymbol{\theta}_i+\mathbf{m}_i.
\end{equation}
%%%%%%%%%%%%%%%%%%%%%%%%%%%%%%%%%%%%%%%%%%%%%%%%%%%%%%%%%%%%%%
We also assume that the measurement in system~\(2\) is noisy. The measurement function in system~\(2\) is therefore given by~(\ref{measurement2_1}):
%%%%%%%%%%%%%%%%%%%%%%%%%%%%%%%%%%%%%%%%%%%%%%%%%%%%%%%%%%%%%%
\begin{equation}\label{measurement2_1}
    \mathbf{y}_i=\mathrm{T}(\boldsymbol{\theta}_i)+\mathbf{n}_i.
\end{equation}
%%%%%%%%%%%%%%%%%%%%%%%%%%%%%%%%%%%%%%%%%%%%%%%%%%%%%%%%%%%%%%

The vectors \(\mathbf{m}_i\), \(\mathbf{n}_i\) are zero mean white Gaussian noise with \(\mathbf{m}_i\in \mathcal{N}(\mathbf{0},\sigma \mathbf{I})\), \(\mathbf{n}_i\in \mathcal{N}(\mathbf{0},\sigma \mathbf{I})\).
With Def.~\ref{Transformation}, we can rewrite the measurement in system~\(2\) as~(\ref{measurement2}):
%%%%%%%%%%%%%%%%%%%%%%%%%%%%%%%%%%%%%%%%%%%%%%%%%%%%%%%%%%%%%%
\begin{equation}\label{measurement2}
    \mathbf{y}_i=\mathbf{A} \boldsymbol{\theta}_i + \mathbf{b}+\mathbf{n}_i
\end{equation}
%%%%%%%%%%%%%%%%%%%%%%%%%%%%%%%%%%%%%%%%%%%%%%%%%%%%%%%%%%%%%%
We assume \(\mathbf{m}_i\) and \(\mathbf{n}_i\) are statistically independent.
The covariance matrix of \(\mathbf{n}_i\) and \(\mathbf{m}_i\) is \(\mathrm{Cov}[\mathbf{m}_i]=\mathrm{Cov}[\mathbf{n}_i]=\sigma^2 \mathbf{I}\).
%%%%%%%%%%%%%%%%%%%%%%%%%%%%%%%%%%%%%%%%%%%%%%%%%%%%%%%%%%%%%%
We can summarize all measurements to the measurement matrices \(\mathbf{X}_{M}=[\mathbf{x}_1,...,\mathbf{x}_n]\), and \(\mathbf{Y}_{M}=[\mathbf{y}_1,...,\mathbf{y}_n]\), and the estimated origin matrix \(\boldsymbol{\Theta}_{E}=[\boldsymbol{\theta}_1,...,\boldsymbol{\theta}_n]\).
%%%%%%%%%%%%%%%%%%%%%%%%%%%%%%%%%%%%%%%%%%%%%%%%%%%%%%%%%%%%%%
\section{Algorithms}\label{Algorithms}
\noindent The solution of Gleser and Watson~\cite{IEEEhowto:Gleser} was only derived for a linear transformation without a translation vector \(\mathbf{b}\). 
To introduce the estimation with the translation vector \(\mathbf{b}\), we extend our input using the well known definition of the augmented matrix in~(\ref{Augmented})
%%%%%%%%%%%%%%%%%%%%%%%%%%%%%%%%%%%%%%%%%%%%%%%%%%%%%%%%%%%%%%
\begin{equation}\label{Augmented}
\mathbf{B}=\left[\begin{array}{c|c}
\mathbf{A} & \mathbf{b} \\
\mathbf{0}^T & 1 \\
\end{array} \right]
\end{equation}
%%%%%%%%%%%%%%%%%%%%%%%%%%%%%%%%%%%%%%%%%%%%%%%%%%%%%%%%%%%%%%
and the new augmented vectors \(\hat{\mathbf{y}}_i=[\mathbf{y}_i^T,\beta_i]^T\), \(\hat{\mathbf{x}}_i=[\mathbf{x}_i^T,\alpha_i]^T\), \(\hat{\boldsymbol{\theta}}_i=[\boldsymbol{\theta}_i^T,\gamma_i]^T\), $\gamma_i=1$, $\forall i=1...n$, \(\hat{\mathbf{m}_i}=[\mathbf{m}_i^T,\mu_i]^T\), and \(\hat{\mathbf{n}_i}=[\mathbf{n}_i^T,\nu_i]^T\) the estimation problem of~(\ref{measurement2}) and~(\ref{measurement1}) can be represented by equations~(\ref{measurement3}) and (\ref{measurement4}):
\begin{align}\label{measurement3}
\hat{\mathbf{x}}_i&= \hat{\boldsymbol{\theta}}_i + \hat{\mathbf{m}}_i,\\
\label{measurement4}
    \hat{\mathbf{y}}_i&=\mathbf{B} \hat{\boldsymbol{\theta}}_i + \hat{\mathbf{n}}_i.
\end{align}
In case of $\alpha_i=1$, $\beta_i=1$, $\gamma_i=1$, $\mu_i=0$, and $\nu_i=0$, we have tranformed the affine transformation into a linear form. 
%%%%%%%%%%%%%%%%%%%%%%%%%%%%%%%%%%%%%%%%%%%%%%%%%%%%%%%%%%%%%%
\begin{figure}
     \centering
      \includegraphics[scale=0.3]{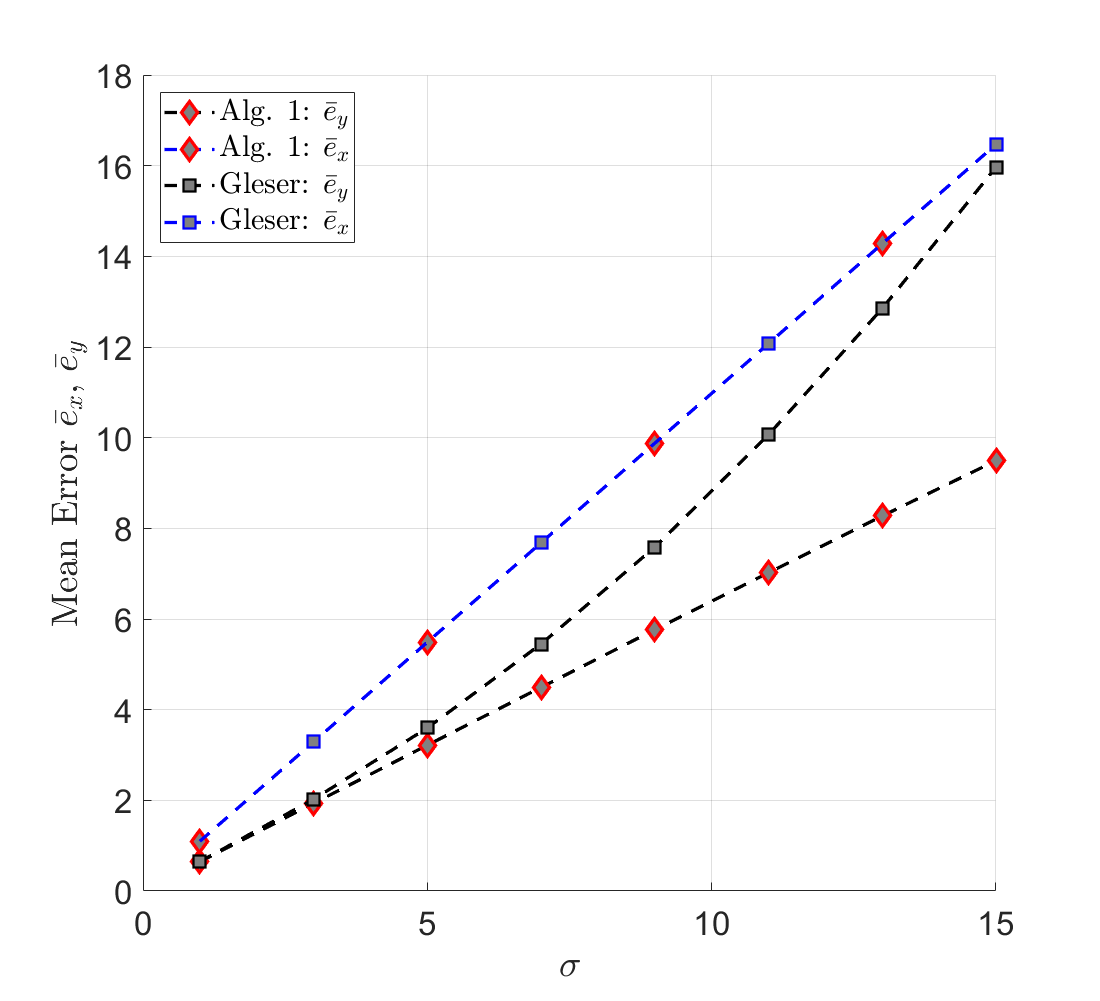}
         \caption{Simulation result, mean errors \(\bar{e}_{x}\) and \(\bar{e}_{y}\) for the augmented implementation of Gleser and Watson~compared to the augmented Alg.~\ref{Alg1} as a function of the standard deviation \(\sigma\).}
         \label{fig:gleserAU}
\end{figure}
%%%%%%%%%%%%%%%%%%%%%%%%%%%%%%%%%%%%%%%%%%%%%%%%%%%%%%%%%%%%%%
Following the same steps as in \cite{IEEEhowto:Gleser}, we can define the same joint likelihood function:
\begin{definition}\label{Likelihood}
The joint likelihood function \cite{IEEEhowto:Gleser} -- with \(p=q+1\), \(n\geq 2p\) and \(\mathbf{X}=[\hat{\mathbf{x}}_1,...,\hat{\mathbf{x}}_n]\), \(\boldsymbol{\Theta}=[\hat{\boldsymbol{\theta}}_1,...,\hat{\boldsymbol{\theta}}_n]\), and \(\mathbf{Y}=[\hat{\mathbf{y}}_1,...,\hat{\mathbf{y}}_n]\) of dimension \(n\times p\) the joint likelihood of \(\mathbf{X}\), \(\mathbf{Y}\) is given by~(\ref{jointlikelihood}):
\begin{equation}\label{jointlikelihood}
    p(\mathbf{X},\mathbf{Y} \vert \boldsymbol{\Theta}, \mathbf{B}, \sigma^2)=(2\pi \sigma^2)^{-n p} e ^{-\frac{1}{2\sigma^2} f(\mathbf{X},\mathbf{Y},\boldsymbol{\Theta})}
\end{equation}
leading to~(\ref{functionf}),
\begin{equation}\label{functionf}
    f(\mathbf{X},\mathbf{Y},\boldsymbol{\Theta})=\mathrm{tr}[(\mathbf{X}-\boldsymbol{\Theta})(\mathbf{X}-\boldsymbol{\Theta})^T]+\mathrm{tr}[(\mathbf{Y}-\mathbf{B}\boldsymbol{\Theta})(\mathbf{Y}-\mathbf{B}\boldsymbol{\Theta})^T].
\end{equation}
\end{definition}
%%%%%%%%%%%%%%%%%%%%%%%%%%%%%%%%%%%%%%%%%%%%%%%%%%%%%%%%%%%%%%
\subsection{Solution proposed by Gleser and Watson}\label{Gleser}
\noindent With Definition~\ref{Likelihood} MLE of \(\boldsymbol{\Theta}\) and \(\mathbf{B}\) of \cite{IEEEhowto:Gleser} can be computed by the observation of the following Lemma:
\begin{lemma}\label{LemmaGleser}
Let \(\boldsymbol{\lambda}\) be the diagonal matrix of the \(p\) largest eigenvalues and \(\mathbf{U}\) be the matrix of all corresponding eigenvectors of \(\mathbf{X}^T\mathbf{X}+\mathbf{Y}^T\mathbf{Y}\),
then the estimates of \(\boldsymbol{\Theta}\) and \(\mathbf{B}\) are given by~(\ref{eigenTheta}) and~(\ref{eigenB}): 
\begin{align}\label{eigenTheta}
    \boldsymbol{\Theta}^{\ddagger}&=[\mathbf{U}\mathbf{U}^T\mathbf{X}^T]^T\\\label{eigenB}
    \mathbf{B}^{\ddagger}&=\mathbf{Y}\boldsymbol{\Theta}^T(\boldsymbol{\Theta}\boldsymbol{\Theta}^T)^{-1}
\end{align}
\end{lemma}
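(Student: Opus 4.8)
The plan is to treat $\boldsymbol{\Theta}$ and $\mathbf{B}$ as the unknowns to be estimated by maximizing the joint likelihood in~(\ref{jointlikelihood}), which is equivalent to minimizing $f(\mathbf{X},\mathbf{Y},\boldsymbol{\Theta})$ in~(\ref{functionf}) over $\boldsymbol{\Theta}\in\mathbb{R}^{n\times p}$ and over admissible $\mathbf{B}$. First I would fix $\boldsymbol{\Theta}$ and minimize over $\mathbf{B}$: the second trace term $\mathrm{tr}[(\mathbf{Y}-\mathbf{B}\boldsymbol{\Theta})(\mathbf{Y}-\mathbf{B}\boldsymbol{\Theta})^T]$ is a standard least-squares form in $\mathbf{B}$, so differentiating with respect to $\mathbf{B}$ and setting the gradient to zero gives the normal equations $\mathbf{B}\boldsymbol{\Theta}\boldsymbol{\Theta}^T = \mathbf{Y}\boldsymbol{\Theta}^T$, hence $\mathbf{B}^{\ddagger}=\mathbf{Y}\boldsymbol{\Theta}^T(\boldsymbol{\Theta}\boldsymbol{\Theta}^T)^{-1}$, which is~(\ref{eigenB}). (Here I would note that the augmented constraint $\gamma_i=1$ and the block structure of $\mathbf{B}$ are compatible with this unconstrained minimizer, following~\cite{IEEEhowto:Gleser}.)

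Next I would substitute $\mathbf{B}^{\ddagger}$ back into $f$. The residual $\mathbf{Y}-\mathbf{B}^{\ddagger}\boldsymbol{\Theta} = (\mathbf{I}-\mathbf{Y}\boldsymbol{\Theta}^T(\boldsymbol{\Theta}\boldsymbol{\Theta}^T)^{-1}\boldsymbol{\Theta})\cdots$ can be rewritten using the projector $\mathbf{P}_{\boldsymbol{\Theta}}=\boldsymbol{\Theta}^T(\boldsymbol{\Theta}\boldsymbol{\Theta}^T)^{-1}\boldsymbol{\Theta}$ onto the row space of $\boldsymbol{\Theta}$, so that the second term becomes $\mathrm{tr}[\mathbf{Y}(\mathbf{I}-\mathbf{P}_{\boldsymbol{\Theta}})\mathbf{Y}^T]$. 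The key observation is then that, holding the row space of $\boldsymbol{\Theta}$ fixed, the first term $\mathrm{tr}[(\mathbf{X}-\boldsymbol{\Theta})(\mathbf{X}-\boldsymbol{\Theta})^T]$ is minimized by taking $\boldsymbol{\Theta}$ to be the orthogonal projection of the rows of $\mathbf{X}$ onto that same row space, i.e.\ $\boldsymbol{\Theta} = \mathbf{X}\mathbf{P}$ for the relevant projector $\mathbf{P}$, which also turns the first term into $\mathrm{tr}[\mathbf{X}(\mathbf{I}-\mathbf{P})\mathbf{X}^T]$. At this point $f$ reduces to $\mathrm{tr}[(\mathbf{X}^T\mathbf{X}+\mathbf{Y}^T\mathbf{Y})(\mathbf{I}-\mathbf{P})]$, where $\mathbf{P}$ ranges over rank-$p$ orthogonal projectors in $\mathbb{R}^{p\times p}$ (after identifying the relevant $p$-dimensional column structure coming from $\mathbf{B}$'s rank).

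The final step is an application of the Ky Fan / Rayleigh–Ritz extremal principle: minimizing $\mathrm{tr}[(\mathbf{X}^T\mathbf{X}+\mathbf{Y}^T\mathbf{Y})(\mathbf{I}-\mathbf{P})]$ over rank-$p$ projectors $\mathbf{P}$ is the same as maximizing $\mathrm{tr}[(\mathbf{X}^T\mathbf{X}+\mathbf{Y}^T\mathbf{Y})\mathbf{P}]$, whose optimum is attained by the projector onto the span of the eigenvectors associated with the $p$ largest eigenvalues of $\mathbf{X}^T\mathbf{X}+\mathbf{Y}^T\mathbf{Y}$, i.e.\ $\mathbf{P}=\mathbf{U}\mathbf{U}^T$ with $\mathbf{U}$ as in the statement. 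Back-substituting gives $\boldsymbol{\Theta}^{\ddagger}=\mathbf{X}\mathbf{U}\mathbf{U}^T$, equivalently $\boldsymbol{\Theta}^{\ddagger}=[\mathbf{U}\mathbf{U}^T\mathbf{X}^T]^T$ since $\mathbf{U}\mathbf{U}^T$ is symmetric, which is~(\ref{eigenTheta}).

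I expect the main obstacle to be the middle step: carefully justifying that the joint minimization over $\boldsymbol{\Theta}$ and $\mathbf{B}$ decouples into "choose the optimal $p$-dimensional subspace, then project both $\mathbf{X}$ and $\mathbf{Y}$ onto it." One must verify that the rank constraint on $\mathbf{B}$ (and the augmented-coordinate bookkeeping with $\gamma_i=1$) really does force $\boldsymbol{\Theta}$'s rows to lie in a common $p$-dimensional subspace, and that $\boldsymbol{\Theta}\boldsymbol{\Theta}^T$ is invertible at the optimum (which is where the hypothesis $n\geq 2p$ enters, guaranteeing generic full rank). Handling the degenerate cases and confirming that the stationary point found is indeed the global maximum of the likelihood — rather than a saddle — is the delicate part; the rest is the routine matrix calculus and the standard eigenvalue extremal argument sketched above, for which I would cite~\cite{IEEEhowto:Gleser}.
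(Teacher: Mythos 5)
Your proposal is correct and follows essentially the same route as the source the paper itself cites for this lemma (Gleser and Watson, Eqs.\ (2.4), (2.6), (2.9)): profile out \(\mathbf{B}\) by multivariate regression, reduce the concentrated objective to \(\mathrm{tr}[(\mathbf{X}^T\mathbf{X}+\mathbf{Y}^T\mathbf{Y})(\mathbf{I}-\mathbf{P})]\) over rank-\(p\) orthogonal projectors, and invoke the eigenvalue extremal principle. One minor correction: with the conventions actually used in the lemma and in Alg.~\ref{Alg1}, \(\mathbf{X},\mathbf{Y},\boldsymbol{\Theta}\) are \(p\times n\), so \(\mathbf{X}^T\mathbf{X}+\mathbf{Y}^T\mathbf{Y}\) is \(n\times n\) and the projector \(\mathbf{P}=\mathbf{U}\mathbf{U}^T\) is \(n\times n\) (acting on the sample space \(\mathbb{R}^n\)), not \(p\times p\) as you wrote; with that fixed, \(\boldsymbol{\Theta}^{\ddagger}=\mathbf{X}\mathbf{U}\mathbf{U}^T\) and \(\mathbf{B}^{\ddagger}=\mathbf{Y}\boldsymbol{\Theta}^{\ddagger T}(\boldsymbol{\Theta}^{\ddagger}\boldsymbol{\Theta}^{\ddagger T})^{-1}\) follow exactly as you describe.
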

The proof is given in \cite{IEEEhowto:Gleser} in Eqs. (2.4), (2.6) and (2.9).
%%%%%%%%%%%%%%%%%%%%%%%%%%%%%%%%%%%%%%%%%%%%%%%%%%%%%%%%%%%%%%
\subsection{Multivariate Regression Approach}\label{Regression}
\noindent 
Important for an implementation on low-power IoT-devices is a balancing of the computation complexity of the algorithms. Therefore, we also investigate an alternative solution in this section. The solution of Gleser and Watson in Lemma \ref{LemmaGleser} needs an eigenvector decomposition. We are interested in an estimation of the AT from \(\mathbf{X}\) to \(\mathbf{Y}\). 
Let $\mathbf{r}_i=\hat{\mathbf{n}}_i - \mathbf{B}\hat{\mathbf{m}}_i$, we can rewrite~(\ref{measurement4}) to~(\ref{measurement5}):
\begin{equation}\label{measurement5}
 \hat{\mathbf{y}}_i=\mathbf{B} \hat{\mathbf{x}}_i + \hat{\mathbf{n}}_i - \mathbf{B}\hat{\mathbf{m}}_i=\mathbf{B} \hat{\mathbf{x}}_i-\mathbf{r}_i.
\end{equation}
%%%%%%%%%%%%%%%%%%%%%%%%%%%%%%%%%%%%%%%%%%%%%%%%%%%%%%%%%%%%%%

This leads to the idea of of a simple least squares estimate \cite{IEEEhowto:Gleser}.
In general, the sensor measurement is noisy. However, in cases of high quality sensors, the noise power is very low. Hence, a simple regression approach has advantages, e.g., lower computational complexity and thus energy consumption.
%%%%%%%%%%%%%%%%%%%%%%%%%%%%%%%%%%%%%%%%%%%%%%%%%%%%%%%%%%%%%%

As observed in \cite{IEEEhowto:Gleser}, if \(\boldsymbol{\Theta}\boldsymbol{\Theta}^T\) is non-singluar, the minimum over \(\mathbf{B}\) of \(\mathrm{tr}[(\mathbf{Y}-\mathbf{B}\boldsymbol{\Theta})(\mathbf{Y}-\mathbf{B}\boldsymbol{\Theta})^T]\) can be obtained based on the results of multivariate regression \(\mathbf{B}=\mathbf{Y}\boldsymbol{\Theta}^T[\boldsymbol{\Theta}\boldsymbol{\Theta}^T]^{-1}\). This solution corresponds to the search for \(\mathbf{B}\) with \(\mathbf{B}\boldsymbol{\Theta} = \mathbf{Y}\). 
The resulting matrix \(\mathbf{B}\) is obtained by multiplying \(\mathbf{Y}\) by the pseudoinverse of \(\boldsymbol{\Theta}\) (i. e., the Moore-Penrose inverse), i.e. \(\mathbf{B}=\mathbf{Y}\boldsymbol{\Theta}^T[\boldsymbol{\Theta}\boldsymbol{\Theta}^T]^{-1}\). This solution is the least squares solution. 
%%%%%%%%%%%%%%%%%%%%%%%%%%%%%%%%%%%%%%%%%%%%%%%%%%%%%%%%%%%%%%
Interestingly, the stationary points of~(\ref{functionf}) are \(\boldsymbol{\Theta}=\mathbf{X}\) and \(\mathbf{Y}=\mathbf{B}\boldsymbol{\Theta}\).
%%%%%%%%%%%%%%%%%%%%%%%%%%%%%%%%%%%%%%%%%%%%%%%%%%%%%%%%%%%%%%
\begin{proposition}\label{PropGrad}
If \(\boldsymbol{\Theta}\boldsymbol{\Theta}^T\) is a non-singular matrix, the stationary points of the function \(f(\mathbf{X},\mathbf{Y},\boldsymbol{\Theta})\) defined by~(\ref{functionf}) over \(\mathbf{B}\) and \(\boldsymbol{\Theta}\) is given by the solutions:
\begin{align*} \boldsymbol{\Theta}^{\star}&=\mathbf{X}\\
    \mathbf{B}^{\star}&=\mathbf{Y}\boldsymbol{\Theta}^T[\boldsymbol{\Theta}\boldsymbol{\Theta}^T]^{-1}.
\end{align*}
\end{proposition}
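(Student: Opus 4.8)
The plan is to treat $f$ in~(\ref{functionf}) as a smooth function of the two matrix blocks $\boldsymbol{\Theta}$ and $\mathbf{B}$, compute the two matrix gradients by standard matrix calculus, set both to zero, and solve the resulting coupled system, using throughout that, by the construction of the augmented quantities with $\alpha_i=\beta_i=\gamma_i=1$, the bottom row of $\mathbf{B}$ is $[\mathbf{0}^T,1]$ and the bottom rows of $\mathbf{X}$, $\mathbf{Y}$ and $\boldsymbol{\Theta}$ are all equal to $\mathbf{1}^T$. Since $f$ is quadratic and convex in each block when the other is held fixed, its stationary points are precisely the pairs that are simultaneous block minimizers, so it suffices to characterize the common zeros of the two gradients.

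First I would differentiate with respect to $\mathbf{B}$: only the second trace term of~(\ref{functionf}) depends on $\mathbf{B}$, and $\partial f/\partial\mathbf{B}=-2(\mathbf{Y}-\mathbf{B}\boldsymbol{\Theta})\boldsymbol{\Theta}^T$. Setting this to $\mathbf{0}$ and using the hypothesis that $\boldsymbol{\Theta}\boldsymbol{\Theta}^T$ is invertible gives at once $\mathbf{B}^{\star}=\mathbf{Y}\boldsymbol{\Theta}^T(\boldsymbol{\Theta}\boldsymbol{\Theta}^T)^{-1}$, the multivariate least-squares / Moore--Penrose solution already identified in the discussion preceding the statement; this settles the second claimed identity for any stationary $\boldsymbol{\Theta}$. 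A short check, using that the bottom rows of $\mathbf{Y}$ and $\boldsymbol{\Theta}$ both equal $\mathbf{1}^T$, shows that this $\mathbf{B}^{\star}$ automatically has the augmented block form, so the structural constraint on $\mathbf{B}$ needs no separate treatment. I would also record two consequences used below: the residual $\mathbf{R}:=\mathbf{Y}-\mathbf{B}^{\star}\boldsymbol{\Theta}$ satisfies $\mathbf{R}\boldsymbol{\Theta}^T=\mathbf{0}$, and (again from the bottom-row normalizations) $\mathbf{R}$ has a vanishing bottom row, with top $q\times n$ block $\mathbf{R}_0$.

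Next I would differentiate with respect to $\boldsymbol{\Theta}$, obtaining $\partial f/\partial\boldsymbol{\Theta}=-2(\mathbf{X}-\boldsymbol{\Theta})-2\mathbf{B}^T(\mathbf{Y}-\mathbf{B}\boldsymbol{\Theta})$ on the free (top $q$) rows, the bottom row of $\boldsymbol{\Theta}$ being pinned to $\mathbf{1}^T$. The decisive step is to show that at a joint stationary point the coupling term $\mathbf{B}^T\mathbf{R}$ contributes nothing on the free rows, so that the condition collapses to $\mathbf{X}-\boldsymbol{\Theta}=\mathbf{0}$ there; since the bottom rows of $\boldsymbol{\Theta}$ and $\mathbf{X}$ already agree this yields $\boldsymbol{\Theta}^{\star}=\mathbf{X}$, and substituting back into the $\mathbf{B}$-equation reproduces $\mathbf{B}^{\star}=\mathbf{Y}\boldsymbol{\Theta}^{\star T}(\boldsymbol{\Theta}^{\star}\boldsymbol{\Theta}^{\star T})^{-1}$. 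The route I would take for the decoupling is to combine $\mathbf{R}\boldsymbol{\Theta}^T=\mathbf{0}$ with the augmented structure: because $\mathbf{B}$ has the augmented block form with invertible top-left block $\mathbf{A}$, and $\mathbf{R}$ has a zero bottom row, the top $q$ rows of $\mathbf{B}^T\mathbf{R}$ reduce to $\mathbf{A}^T\mathbf{R}_0$, and one then argues that the stationarity relations force $\mathbf{R}_0=\mathbf{0}$, i.e.\ $\mathbf{Y}=\mathbf{B}\boldsymbol{\Theta}$ --- the observation already flagged in the paragraph above the statement --- after which the free-row condition is just $\mathbf{X}-\boldsymbol{\Theta}=\mathbf{0}$.

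The step I expect to be the main obstacle is exactly this last decoupling. If one differentiates naively and ignores the augmented structure, the $\boldsymbol{\Theta}$-equation reads $(\mathbf{I}+\mathbf{B}^T\mathbf{B})\boldsymbol{\Theta}=\mathbf{X}+\mathbf{B}^T\mathbf{Y}$, which is not $\boldsymbol{\Theta}=\mathbf{X}$ in general; it is the interplay of all three ingredients --- non-singularity of $\boldsymbol{\Theta}\boldsymbol{\Theta}^T$, invertibility of the $\mathbf{A}$-block, and the normalizations $\alpha_i=\beta_i=\gamma_i=1$ (so that the augmented coordinates add nothing to $f$ and the residual's bottom row vanishes) --- that lets one conclude $\mathbf{R}_0=\mathbf{0}$ and thereby eliminate the coupling term. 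I would therefore state explicitly where each hypothesis is used, and present the $\mathbf{B}$-step before the $\boldsymbol{\Theta}$-step so that the identity $\mathbf{R}\boldsymbol{\Theta}^T=\mathbf{0}$ is already in hand when the coupling term must be removed.
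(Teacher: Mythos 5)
Your setup and both gradients coincide with the paper's Appendix, and your \(\mathbf{B}\)-step is the paper's as well: \(\nabla_{\mathbf{B}}f=-2(\mathbf{Y}-\mathbf{B}\boldsymbol{\Theta})\boldsymbol{\Theta}^T=\mathbf{0}\) together with the non-singularity of \(\boldsymbol{\Theta}\boldsymbol{\Theta}^T\) gives \(\mathbf{B}^{\star}=\mathbf{Y}\boldsymbol{\Theta}^T[\boldsymbol{\Theta}\boldsymbol{\Theta}^T]^{-1}\). The proposal fails, however, at exactly the step you yourself call decisive. You assert that ``the stationarity relations force \(\mathbf{R}_0=\mathbf{0}\)'', i.e.\ \(\mathbf{Y}=\mathbf{B}\boldsymbol{\Theta}\), but you never derive this, and the ingredients you invoke do not yield it. The \(\boldsymbol{\Theta}\)-stationarity on the free rows reads \((\mathbf{X}-\boldsymbol{\Theta})_0=-\mathbf{A}^T\mathbf{R}_0\); combining it with \(\mathbf{R}\boldsymbol{\Theta}^T=\mathbf{0}\) only gives \((\mathbf{X}-\boldsymbol{\Theta})\boldsymbol{\Theta}^T=\mathbf{0}\), not \(\mathbf{R}_0=\mathbf{0}\). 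The least-squares residual \(\mathbf{R}=\mathbf{Y}-\mathbf{B}^{\star}\boldsymbol{\Theta}\) is the component of \(\mathbf{Y}\) orthogonal to the row space of \(\boldsymbol{\Theta}\), a space of dimension \(n-p>0\) (recall \(n\geq 2p\)), so it is generically nonzero for noisy data; invertibility of the \(\mathbf{A}\)-block and the unit bottom rows only tell you that the bottom row of \(\mathbf{R}\) vanishes and that the top rows of the coupling term equal \(\mathbf{A}^T\mathbf{R}_0\) — facts you already use — and they cannot annihilate \(\mathbf{R}_0\). So the implication ``stationarity \(\Rightarrow\mathbf{R}_0=\mathbf{0}$'' is a genuine missing link, not a detail, and the augmented-structure route you sketch does not supply it.

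The paper closes this step by a different, purely algebraic manoeuvre that never touches the augmented structure or the invertibility of \(\mathbf{A}\): starting from~(\ref{gradB}), \(\mathbf{B}\boldsymbol{\Theta}\boldsymbol{\Theta}^T=\mathbf{Y}\boldsymbol{\Theta}^T\), it right-multiplies by \(\boldsymbol{\Theta}\) and then by \([\boldsymbol{\Theta}^T\boldsymbol{\Theta}]^{-1}\) to obtain \(\mathbf{Y}=\mathbf{B}\boldsymbol{\Theta}\) in~(\ref{gradB2}), and only then substitutes into~(\ref{gradTheta}), whereupon the coupling term cancels and \(\boldsymbol{\Theta}=\mathbf{X}\) follows as in~(\ref{gradB3}). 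Your instinct that the naive condition \((\mathbf{I}+\mathbf{B}^T\mathbf{B})\boldsymbol{\Theta}=\mathbf{X}+\mathbf{B}^T\mathbf{Y}\) does not collapse to \(\boldsymbol{\Theta}=\mathbf{X}\) without an additional ingredient is sound — note that the paper's additional ingredient is the inversion of the \(n\times n\) Gram matrix \(\boldsymbol{\Theta}^T\boldsymbol{\Theta}\) (of rank at most \(p\)), not the augmented form — but as submitted your argument replaces that manoeuvre with an unproven (and, in general, false) claim, so it does not establish the proposition. To repair it you must either adopt the paper's derivation of \(\mathbf{Y}=\mathbf{B}\boldsymbol{\Theta}\) or give an actual proof of \(\mathbf{R}_0=\mathbf{0}\) under hypotheses that make it true.
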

\begin{proof}
    See Appendix.
\end{proof}
According to Appendix, with \(\boldsymbol{\Theta}=\mathbf{X}\), we finally get a simple equation (\ref{gradB4}) for the MLE of \(\mathbf{B}\):
\begin{equation}\label{gradB4}
    \mathbf{B}=\mathbf{Y}\mathbf{X}^T[\mathbf{X}\mathbf{X}^T]^{-1}
\end{equation}
Finally, from \(\mathbf{B}\) with~(\ref{Augmented}), we obtain the \(\tilde{\mathbf{A}}\) estimatation of the matrix \(\mathbf{A}\) and the \(\tilde{\mathbf{b}}\) estimation of the vector \(\mathbf{b}\).
At the end, we are interested in a transformation of the measured data \(\mathbf{x}_i\) of system~\(1\) to system~\(2\) \(\tilde{\mathbf{y}}_i\) as described in~(\ref{transform5}):
\begin{equation}\label{transform5}
    \tilde{\mathbf{y}}_i=\tilde{\mathbf{A}}\mathbf{x}_i+\tilde{\mathbf{b}}.
\end{equation}
%%%%%%%%%%%%%%%%%%%%%%%%%%%%%%%%%%%%%%%%%%%%%%%%%%%%%%%%%%%%%%
\begin{algorithm}
 \caption{Augmented implementation of Gleser and Watson~(denoised)}
 \textbf{Data:} Measurement data of the two systems \(\mathbf{X}\), \(\mathbf{Y}\)\\
  \textbf{Result:} Estimation of \(\tilde{\mathbf{A}}\), \(\tilde{\mathbf{b}}\) and \(\tilde{\boldsymbol{\Theta}}=\boldsymbol{\Theta}_{E}\)
 
\begin{algorithmic}[1]

\STATE \([\mathbf{U},\boldsymbol{\Lambda}]=\mathrm{eigs}(\mathbf{X}^T\mathbf{X}+\mathbf{Y}^T\mathbf{Y},p)\)\;

\STATE \(\boldsymbol{\Theta}^T=\mathbf{U}\mathbf{U}^T\mathbf{X}^T\)\;

\STATE \(\boldsymbol{\Theta}(p,:)=\mathbf{1}^T \)\;

\STATE \(\mathbf{B}=\mathbf{Y}\boldsymbol{\Theta}^T(\boldsymbol{\Theta}\boldsymbol{\Theta}^T)^{-1}\)\;

\STATE \(\tilde{\mathbf{A}}=\mathbf{B}(1:p-1,1:p-1)\)\;

\STATE \(\tilde{\mathbf{b}}=\mathbf{b}(1:p-1,p)\)\;

\STATE \(\boldsymbol{\Theta}_{E}=\boldsymbol{\Theta}(1:p-1,:)\)
\end{algorithmic}
\label{Alg1}
\end{algorithm}
%%%%%%%%%%%%%%%%%%%%%%%%%%%%%%%%%%%%%%%%%%%%%%%%%%%%%%%%%%%%%%

Algorithm~\ref{Alg1} is the augmented implementation of Gleser and Watson. In line~\(2\) we calculate the~\(p\) largest eigenvalues \(\boldsymbol{\Lambda}\) according to Lemma~\ref{LemmaGleser}, and use the associated eigenvectors to estimate~\(\boldsymbol{\Theta}\), the origins of \(\mathbf{X}\), in line~\(3\). 
The variables $\gamma_i$ should satisfy  $\gamma_i=1$, $\forall i=1,...,n$. Therefore, after step~\(2\) of Algorithm~\ref{Alg1} set $\gamma_i=1$, $\forall i=1,...,n$.
The resulting~\(\boldsymbol{\Theta}\) augmentation line contains many values close to~\(1\), so we set those to the value \(1\) in line~\(4\) to suppress the small eigenvalues impact on~\(\boldsymbol{\Theta}\) with low noise power \cite{Bib:PrincipalComponents}. Lines~\(5\) to~\(7\) calculate and extract the estimated AT parameters and line~\(8\) removes the augmentation to format the origin estimation.
%%%%%%%%%%%%%%%%%%%%%%%%%%%%%%%%%%%%%%%%%%%%%%%%%%%%%%%%%%%%%%

Algorithm~\ref{Alg2} corresponds to the derivation of the simple least squares according to Proposition~\ref{PropGrad}. Since our focus lies not in the estimation of the origins \(\boldsymbol{\Theta}\), we can directly calculate and extract the estimatation parameters in lines~\(2\) to~\(5\). Although we cannot achieve improved complexity, it should be possible to achieve lower energy consumption through an improved number of operations, since Alg.~\ref{Alg2} skips step~1 and step~2 of Alg.~\ref{Alg1}. Furthermore, the accuracy is very close to the results of Alg.~\ref{Alg3} (cf. Table~\ref{tab:simuerrors} and Table~\ref{tab:realerrors}), which, however, requires a similar number of operations as Alg.~\ref{Alg1}.
%%%%%%%%%%%%%%%%%%%%%%%%%%%%%%%%%%%%%%%%%%%%%%%%%%%%%%%%%%%%%%
\begin{algorithm}[]
 \caption{Implementation of the simple least squares according to Proposition~\ref{PropGrad}.}
 \textbf{Data:} Measurement data of the two systems \(\mathbf{X}\), \(\mathbf{Y}\)\\
 \textbf{Result:} Estimation of \(\tilde{\mathbf{A}}\), \(\tilde{\mathbf{b}}\) and \(\tilde{\boldsymbol{\Theta}}=\boldsymbol{\Theta}_{E}\)

\begin{algorithmic}[1]

\STATE \(\mathbf{B}=\mathbf{Y}\mathbf{X}^T(\mathbf{X}\mathbf{X}^T)^{-1}\)\;

\STATE \(\tilde{\mathbf{A}}=\mathbf{B}(1:p-1,1:p-1)\)\;

\STATE \(\tilde{\mathbf{b}}=\mathbf{b}(1:p-1,p)\)\;

\STATE \(\boldsymbol{\Theta}_{E}=\mathbf{X}(1:p-1,:)\)
\end{algorithmic}

\label{Alg2}
\end{algorithm}
%%%%%%%%%%%%%%%%%%%%%%%%%%%%%%%%%%%%%%%%%%%%%%%%%%%%%%%%%%%%%%

The simple least squares solution of Alg.~\ref{Alg2} and the solution of Gleser and Watson~in Alg.~\ref{Alg1} can be combined to the improved Alg.~\ref{Alg3} by using the following observations. 
%%%%%%%%%%%%%%%%%%%%%%%%%%%%%%%%%%%%%%%%%%%%%%%%%%%%%%%%%%%%%%
\subsection{Hybrid Approach}\label{Regression}

%%%%%%%%%%%%%%%%%%%%%%%%%%%%%%%%%%%%%%%%%%%%%%%%%%%%%%%%%%%%%%
\begin{proposition}\label{Prop2}
Let \(\mathbf{D}\) be the diagonal matrix of all eigenvalues and \(\mathbf{V}\) be the matrix of all corresponding eigenvectors of \(\mathbf{X}^T\mathbf{X}+\mathbf{Y}^T\mathbf{Y}\) and let \(\mathbf{B}^{\dagger}\) and \(\boldsymbol{\Theta}^{\dagger}\) denote the solution of \(\mathbf{B}\) in this case
then the estimates are given by the Proposition~\ref{PropGrad} and it holds \(\mathbf{B}^{\dagger}=\mathbf{B}^{\star}\) and \(\boldsymbol{\Theta}^{\dagger}=\boldsymbol{\Theta}^{\star}\).
\end{proposition}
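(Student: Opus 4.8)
The plan is to show that the Gleser--Watson construction of Lemma~\ref{LemmaGleser}, when carried out with the \emph{full} eigenvector matrix $\mathbf{V}$ in place of the truncation $\mathbf{U}$ to the $p$ dominant eigenvectors, degenerates exactly to the regression stationary point of Proposition~\ref{PropGrad}. So the entire content is to trace what happens in~(\ref{eigenTheta}) and~(\ref{eigenB}) once $\mathbf{U}$ is replaced by $\mathbf{V}$.

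First I would invoke the spectral theorem: the matrix $\mathbf{X}^T\mathbf{X}+\mathbf{Y}^T\mathbf{Y}$ is symmetric and positive semidefinite, hence it admits an orthonormal basis of eigenvectors; assembling \emph{all} of them into $\mathbf{V}$ produces a square orthogonal matrix, so $\mathbf{V}\mathbf{V}^T=\mathbf{V}^T\mathbf{V}=\mathbf{I}$. This identity is the single structural fact behind the proposition, and it is precisely what fails for the truncated $\mathbf{U}$ of Lemma~\ref{LemmaGleser}, where $\mathbf{U}\mathbf{U}^T$ is only the orthogonal projector onto the dominant $p$-dimensional eigenspace rather than the identity.

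Next I would substitute $\mathbf{U}\mapsto\mathbf{V}$ in~(\ref{eigenTheta}). Since $\mathbf{V}\mathbf{V}^T=\mathbf{I}$, the projection acts trivially and $\boldsymbol{\Theta}^{\dagger}=[\mathbf{V}\mathbf{V}^T\mathbf{X}^T]^T=\mathbf{X}$. I would then check that the augmentation row does not spoil this: the constraint $\gamma_i=1$ for all $i$, together with the $\alpha_i=1$ linearization, means the last row of $\mathbf{X}$ is already $\mathbf{1}^T$, so the normalization step in line~3 of Algorithm~\ref{Alg1} leaves $\boldsymbol{\Theta}^{\dagger}$ untouched; hence $\boldsymbol{\Theta}^{\dagger}=\mathbf{X}=\boldsymbol{\Theta}^{\star}$, the stationary point of Proposition~\ref{PropGrad}.

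Finally I would plug $\boldsymbol{\Theta}^{\dagger}=\mathbf{X}$ into~(\ref{eigenB}): $\mathbf{B}^{\dagger}=\mathbf{Y}(\boldsymbol{\Theta}^{\dagger})^T\bigl(\boldsymbol{\Theta}^{\dagger}(\boldsymbol{\Theta}^{\dagger})^T\bigr)^{-1}=\mathbf{Y}\mathbf{X}^T(\mathbf{X}\mathbf{X}^T)^{-1}$, which is legitimate because $\boldsymbol{\Theta}\boldsymbol{\Theta}^T=\mathbf{X}\mathbf{X}^T$ is non-singular by the standing hypothesis of Proposition~\ref{PropGrad} (automatic for $n\ge 2p$ with data in general position). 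Comparing with~(\ref{gradB4})---equivalently, evaluating $\mathbf{B}^{\star}$ at $\boldsymbol{\Theta}^{\star}=\mathbf{X}$---yields $\mathbf{B}^{\dagger}=\mathbf{B}^{\star}$, which finishes the proof. I do not expect a genuine analytic obstacle; the only points to watch are the transpose/augmentation conventions shared between Lemma~\ref{LemmaGleser}, Proposition~\ref{PropGrad} and Algorithm~\ref{Alg1}, and stating the non-singularity assumption explicitly so that the inverse in~(\ref{eigenB}) really is the one appearing in Proposition~\ref{PropGrad}.
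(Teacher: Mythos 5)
Your proposal is correct and follows essentially the same route as the paper: invoke the symmetric Schur decomposition of \(\mathbf{X}^T\mathbf{X}+\mathbf{Y}^T\mathbf{Y}\) so that the full eigenvector matrix satisfies \(\mathbf{V}\mathbf{V}^T=\mathbf{I}\), whence \(\boldsymbol{\Theta}^{\dagger}=[\mathbf{V}\mathbf{V}^T\mathbf{X}^T]^T=\mathbf{X}=\boldsymbol{\Theta}^{\star}\) and then \(\mathbf{B}^{\dagger}=\mathbf{Y}\mathbf{X}^T(\mathbf{X}\mathbf{X}^T)^{-1}=\mathbf{B}^{\star}\). You in fact spell out the final substitution into~(\ref{eigenB}) and the role of the augmentation row and the non-singularity of \(\mathbf{X}\mathbf{X}^T\), details the paper's proof leaves implicit after establishing \(\boldsymbol{\Theta}=\mathbf{X}\).
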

%%%%%%%%%%%%%%%%%%%%%%%%%%%%%%%%%%%%%%%%%%%%%%%%%%%%%%%%%%%%%%
\begin{proof}
The Matrix \(\mathbf{X}^T\mathbf{X}+\mathbf{Y}^T\mathbf{Y}\) is symmetric, then there exist an orthonormal basis of eigenvectors \cite{Golub}. According to the symmetric Schur Decomposition (see Theorem 8.1.1 in \cite{Golub}) there exists a real orthogonal matrix \(\mathbf{V}\) with \(\mathbf{V}\mathbf{V}^T=\mathbf{I}\) that satisfies~(\ref{eigenV})
\begin{equation}\label{eigenV}
   \mathbf{V}^T (\mathbf{X}^T\mathbf{X}+\mathbf{Y}^T\mathbf{Y})\mathbf{V}=\mathbf{D}.
\end{equation}
With \(\mathbf{V}\mathbf{V}^T = \mathbf{I}\), we can conclude \(\boldsymbol{\Theta}^T=\mathbf{V} \mathbf{V}^T\mathbf{X}^T=\mathbf{X}^T.\)
%\phantom\qedhere
\end{proof}
According to Proposition~\ref{Prop2}, the results of Alg.~\ref{Alg1} and Alg.~\ref{Alg2} are identical if all eigenvalues are used. 
As Alg.~\ref{Alg1} offers a better estimation of \(\boldsymbol{\Theta}\), we can improve Alg.~\ref{Alg2}, by using the estimator for \(\boldsymbol{\Theta}\) from Alg.~\ref{Alg1}. The resulting (hybrid) algorithm is given in Alg.~\ref{Alg3}.
%%%%%%%%%%%%%%%%%%%%%%%%%%%%%%%%%%%%%%%%%%%%%%%%%%%%%%%%%%%%%%
\begin{table*}[!t]
\centering
\caption{Averaged simulation errors \(\bar{e}_{x}\) and \(\bar{e}_{y}\) for implementation of Gleser and Watson, Alg.~\ref{Alg1}, Alg.~\ref{Alg2}, and Alg.~\ref{Alg3} with increasing standard deviation \(\sigma\).}
\begin{tabular}{ | c | r | r | r | r | r | r | r | r |}
  \hline
  \diagbox{Algorithm:}{Standard deviation \(\sigma\):} & 1 & 3 & 5 & 7 & 9 & 11 & 13 & 15 \\ \hline
Gleser and Watson~\(\bar{e}_{x}\) & \(\mathbf{1.0975}\) & \(\mathbf{3.2894}\) & \(\mathbf{5.4887}\) & \(\mathbf{7.6816}\) & \(\mathbf{9.8777}\) & \(\mathbf{12.0760}\) & \(\mathbf{14.2788}\) & \(\mathbf{16.4682}\) \\ \hline
Alg.~\ref{Alg1} \(\bar{e}_{x}\) & \(\mathbf{1.0975}\) & \(\mathbf{3.2894}\) & \(\mathbf{5.4887}\) & \(\mathbf{7.6816}\) & \(\mathbf{9.8777}\) & \(\mathbf{12.0760}\) & \(\mathbf{14.2788}\) & \(\mathbf{16.4682}\)  \\ \hline
Alg.~\ref{Alg2} \(\bar{e}_{x}\) & \(1.2539\) & \(3.7581\) & \(6.2647\) & \(8.7715\) & \(11.2802\) & \(13.7877\) & \(16.3000\) & \(18.8043\) \\ \hline
Alg.~\ref{Alg3} \(\bar{e}_{x}\) & \(\mathbf{1.0975}\) & \(\mathbf{3.2894}\) & \(\mathbf{5.4887}\) & \(\mathbf{7.6816}\) & \(\mathbf{9.8777}\) & \(\mathbf{12.0760}\) & \(\mathbf{14.2788}\) & \(\mathbf{16.4682}\)  \\ \hhline{|=|=|=|=|=|=|=|=|=|}
Gleser and Watson~\(\bar{e}_{y}\) & \(0.6479\) & \(2.0236\) & \(3.5964\) & \(5.4476\) & \(7.5847\) & \(10.0661\) & \(12.8498\) & \(15.9657\) \\ \hline
Alg.~\ref{Alg1} \(\bar{e}_{y}\) & \(\mathbf{0.6444}\) & \(1.9338\) & \(3.2165\) & \(4.4912\) & \(5.7641\) & \(7.0201\) & \(8.2726\) & \(9.4927\) \\ \hline
Alg.~\ref{Alg2} \(\bar{e}_{y}\) & \(0.8320\) & \(2.4870\) & \(4.1052\) & \(5.6797\) & \(7.1982\) & \(8.6297\) & \(9.9800\) & \(11.2375\) \\ \hline
Alg.~\ref{Alg3} \(\bar{e}_{y}\) & \(\mathbf{0.6444}\) & \(\mathbf{1.9322}\) & \(\mathbf{3.2110}\) & \(\mathbf{4.4739}\) & \(\mathbf{5.7232}\) & \(\mathbf{6.9369}\) & \(\mathbf{8.1163}\) & \(\mathbf{9.2513}\) \\ \hline
\end{tabular}
\label{tab:simuerrors}
\end{table*}
%%%%%%%%%%%%%%%%%%%%%%%%%%%%%%%%%%%%%%%%%%%%%%%%%%%%%%%%%%%%%%
\begin{algorithm}[]
\caption{Hybrid solution: Implementation of a combination of the simple MLE and Gleser and Watson}\label{Alg3}
 \textbf{Data:} Measurement data of the two systems \(\mathbf{X}\), \(\mathbf{Y}\)\\
 \textbf{Result:} Estimation of \(\tilde{\mathbf{A}}\), \(\tilde{\mathbf{b}}\) and \(\tilde{\boldsymbol{\Theta}}=\boldsymbol{\Theta}_{E}\)

\begin{algorithmic}[1]

\STATE \([\mathbf{V},\mathbf{D}]=\mathrm{eigs}(\mathbf{X}^T\mathbf{X}+\mathbf{Y}^T\mathbf{Y})\)\;

\STATE \(\boldsymbol{\Theta}^T=\mathbf{V}\mathbf{V}^T\mathbf{X}^T\)\;
 
\STATE \(\mathbf{B}=\mathbf{Y}\mathbf{X}^T(\mathbf{X}\mathbf{X}^T)^{-1}\)\;

\STATE \(\tilde{\mathbf{A}}=\mathbf{B}(1:p-1,1:p-1)\)\;

\STATE \(\tilde{\mathbf{b}}=\mathbf{b}(1:p-1,p)\)\;

\STATE \(\boldsymbol{\Theta}_{E}=\boldsymbol{\Theta}(1:p-1,:)\)
\end{algorithmic}
\end{algorithm}
%%%%%%%%%%%%%%%%%%%%%%%%%%%%%%%%%%%%%%%%%%%%%%%%%%%%%%%%%%%%%%
\section{Results}\label{Results}
\noindent We evaluate the presented algorithms and investigate the difference between the implementation of Gleser and Watson~and our denoised augmented improvement using Monte Carlo simulations. To test the applicability in practice, we provide the algorithms with data from an application for electronic noses.
%%%%%%%%%%%%%%%%%%%%%%%%%%%%%%%%%%%%%%%%%%%%%%%%%%%%%%%%%%%%%%
\subsection{Monte Carlo Simulations}\label{MonteCarloSimulation}
\noindent We developed Monte Carlo simulations to evaluate the algorithms. In~\(l=1000\) simulation runs~\(n=1000\) data vectors~\(\boldsymbol{\theta}_i\) were generated with a data generation tool from \cite{Bib:DataPaper}. We add a noise vector, where each element is a normally distributed random number scaled by the square root of the desired variance \(\sigma^2\) using standard deviations \(\sigma\) from~\(1\) to~\(15\). The data vectors have a dimension of~\(q=2\). 
We used a fixed transformation given by:
\begin{equation*}
  \mathbf{A}=\left[\begin{array}{cc}
0.3430 & 0.3430 \\
0.1715 & 0.8575 \\
\end{array} \right]
\end{equation*}
and \(\mathbf{b}^T=[52,-58]\).
For comparison, the following errors were defined in~(\ref{ErrorMCS}):
\begin{equation}\label{ErrorMCS}
    e_{y}=\frac{1}{n}\sum_{i=1}^n \lVert (\tilde{\mathbf{A}}\tilde{\boldsymbol{\theta}}_i+\tilde{\mathbf{b}})-(\mathbf{A}\boldsymbol{\theta}_i+\mathbf{b}) \rVert_{2},
\end{equation}
where \(\tilde{\boldsymbol{\theta}}=[\tilde{\boldsymbol{\theta}}_1,...,\tilde{\boldsymbol{\theta}}_n]\).
%%%%%%%%%%%%%%%%%%%%%%%%%%%%%%%%%%%%%%%%%%%%%%%%%%%%%%%%%%%%%%
\begin{figure}
     \centering
      \includegraphics[scale=0.3]{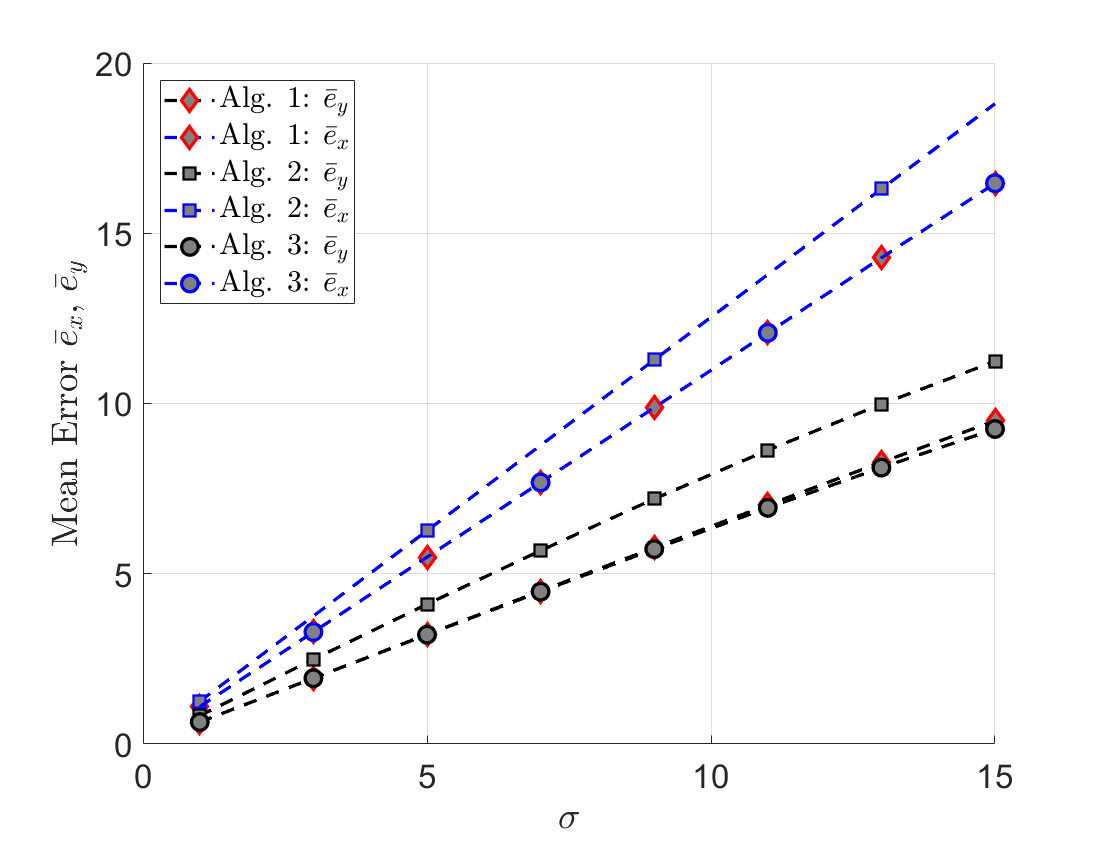}
         \caption{Simulation result, mean errors \(\bar{e}_{x}\) and \(\bar{e}_{y}\) for Alg.~\ref{Alg1}, Alg.~\ref{Alg2}, and Alg.~\ref{Alg3} as a function of the standard deviation \(\sigma\).}
         \label{fig:MeanError}
\end{figure}
%%%%%%%%%%%%%%%%%%%%%%%%%%%%%%%%%%%%%%%%%%%%%%%%%%%%%%%%%%%%%%
The solution of Gleser and Watson~also estimates \(\boldsymbol{\Theta}\), therefore, we also calculate the error defined in~(\ref{errorX}):
\begin{equation}\label{errorX}
    e_{x}=\frac{1}{n}\sum_{i=1}^n \lVert \tilde{\boldsymbol{\theta}}_i-\boldsymbol{\theta}_i \rVert_{2}. 
\end{equation}
%%%%%%%%%%%%%%%%%%%%%%%%%%%%%%%%%%%%%%%%%%%%%%%%%%%%%%%%%%%%%%
Finally, we calculate the arithmetic mean values of the errors~\(e_x\) and~\(e_y\) over the \(l=1000\) simulation runs. 

The arithmetic mean values are denoted with~\(\bar{e}_x\) and~\(\bar{e}_y\). Fig.~\ref{fig:MeanError} shows the mean estimation errors~\(\bar{e}_{x}\) and~\(\bar{e}_{y}\) of the algorithms. 
%%%%%%%%%%%%%%%%%%%%%%%%%%%%%%%%%%%%%%%%%%%%%%%%%%%%%%%%%%%%%%
Since Algorithm~\ref{Alg2} only estimates the parameters of the transformation between the noisy measured data from system~1 to system~2, it performs worse than Gleser and Watson~taking the origins into account. In this case, the error~\(\bar{e}_x\) is only the noise of system~1. The hybrid version in Alg.~\ref{Alg3} benefits from Gleser and Watson~and delivers similar quality results for the given transformation matrix.

Analyzing error~\(\bar{e}_y\) in Table~\ref{tab:simuerrors} shows the improvement of the modification. Alg.~\ref{Alg1} improves the results of Gleser and Watson
%%%%%%%%%%%%%%%%%%%%%%%%%%%%%%%%%%%%%%%%%%%%%%%%%%%%%%%%%%%%%%
\subsection{Eigenvector Denoising by Augmentation}\label{GleserAUSimlation}

\noindent Fig.~\ref{fig:gleserAU} shows the results of the Monte Carlo simulation for the eigenvector denoising extension of Alg.~\ref{Alg1} in comparison with the Gleser and Watson~standard implementation. One can see the improvement in the estimation of \(\bar{e}_{y}\) while error \(\bar{e}_{x}\) remains similar. 
%%%%%%%%%%%%%%%%%%%%%%%%%%%%%%%%%%%%%%%%%%%%%%%%%%%%%%%%%%%%%%
\begin{table*}[!t]
\centering
\caption{Normalized transformation error \(\bar{e}_{y}\) linearly scaled (\(min=0.00072817\) resulting 0, \(max=0.0028\) resulting 1) using real data. Averaged comparison for each sensor dataset with the transformed data from all sensors for the implementation of Gleser and Watson, Alg.~\ref{Alg1}, Alg.~\ref{Alg2}, Alg.~\ref{Alg3}, and the feature-wise normalized dataset.}
\begin{tabular}{ | c | r | r | r | r | r | r | r | r |}
  \hline
  \diagbox{Algorithm:}{Source data sensor:} & 1 & 2 & 3 & 4 & 5 & 6 & 7 & 8 \\ \hline
Gleser and Watson~\(\bar{e}_{y}\) & \(0.0448\) & \(0.1170\) & \(0.0625\) & \(0.0075\) & \(0.2824\) & \(0.3171\) & \(0.0316\) & \(0.0186\) \\ \hline
Alg.~\ref{Alg1} \(\bar{e}_{y}\) & \(0.0448\) & \(0.1169\) & \(0.0625\) & \(0.0075\) & \(0.2822\) & \(0.3170\) & \(0.0316\) & \(0.0186\) \\ \hline
Alg.~\ref{Alg2} \(\bar{e}_{y}\) & \(0.0100\) & \(0.0510\) & \(0.0502\) & \(\mathbf{0}\) & \(0.1779\) & \(0.1208\) & \(0.0357\) & \(0.0117\) \\ \hline
Alg.~\ref{Alg3} \(\bar{e}_{y}\) & \(0.0100\) & \(0.0510\) & \(0.0502\) & \(\mathbf{0}\) & \(0.1779\) & \(0.1208\) & \(0.0357\) & \(0.0117\) \\ \hline
feature-wise normalized & \(0.9885\) & \(0.6450\) & \(0.5701\) & \(0.5016\) & \(0.4780\) & \(1\) & \(0.4636\) & \(0.4164\) \\ \hline
\end{tabular}
\label{tab:realerrors}
\end{table*}
%%%%%%%%%%%%%%%%%%%%%%%%%%%%%%%%%%%%%%%%%%%%%%%%%%%%%%%%%%%%%%
Table~\ref{tab:simuerrors} gives an overview of the mean errors \(\bar{e}_{x}\) and \(\bar{e}_{y}\) for the implementation of Gleser and Watson, Alg.~\ref{Alg1}, Alg.~\ref{Alg2}, and Alg.~\ref{Alg3} for the selected values of the standard deviation \(\sigma\).
%%%%%%%%%%%%%%%%%%%%%%%%%%%%%%%%%%%%%%%%%%%%%%%%%%%%%%%%%%%%%%
\subsection{Case Study: 8 Sensor Board Scenario}\label{RealDataSimulation}

\noindent For the collection of real test data we measured effects of humidity, temperature and disturbances in the air. In addition, we heat the samples during data recording and clean the sensors at regular intervals. 
\vspace{0.2em}

\noindent \textbf{Data Preparation:} The Bosch BME688 development kit board, shown in Fig.~\ref{fig:bme688}, is equipped with eight identical BME688 metal oxide sensors that are used for data recording. For the experiment, we prepared two samples~\cite{Bib:DatasetJunk}, water and apple juice, each with a volume of 100~ml. We present only the apple juice investigations for clarity. The results with water are similar and are available in the gitlab repository. Before each measurement of the sample, the sensor is cleaned in air. The measurement period per glass is~20 minutes. The temperature of the sample glass is increased by~1~°C every~2~minutes during the measurement. The total temperature increases from~25~°C at the beginning of the measurement, to~35~°C at the end of the measurement period. The Bosch BME688 is operated with a continuously repeating temperature cycle of~10 heating temperature steps, also called parallel mode. In this example, the temperature cycle is set to~5~steps at~200~°C and~5~steps at a sensor temperature of~400~°C, each at an interval of~1260~milliseconds. For later evaluation of the measurement data, the arithmetic mean is calculated from the~5~raw values at~200~°C and~400~°C. As already mentioned in section~\ref{cha:Introduction}, Fig.~\ref{fig:Intro1} shows the results of the estimated transformation for an normalized example data set in the use-case of an electronic nose measuring apple juice. The black dots show the position of the data points in system~\(1\), blue dots represent the measured data in system~2 and the red dots are the estimation of system~2 based on the calculated transformation of Proposition~\ref{PropGrad}. It can be seen that Proposition~\ref{PropGrad} allows a good estimation of the transformation in this case.
\vspace{0.2em}

\noindent \textbf{Data Evaluation:} We calculated the error~\({e}_{y}\) for the~\(64\)~combinations of sensors denoting~\(j\) as the source sensor and~\(k\) as the target sensor, \(j,k \in \{1,...,8\}\) by~(\ref{Error_with_X})
\begin{equation}
    \label{Error_with_X}
    {e}_{y}= \lVert (\tilde{\mathbf{A}}_{j,k}{\mathbf{x}_j}+\tilde{\mathbf{b}}_{j,k})-\mathbf{y}_k \rVert_{2},
\end{equation}
where the \(\tilde{\mathbf{A}}_{j,k}\) and \(\tilde{\mathbf{b}}_{j,k}\) are representing the transformation parameters from source sensor~\(j\) to target sensor~\(k\), including cases \(j=k\), using \(K=8\) sensors.
%%%%%%%%%%%%%%%%%%%%%%%%%%%%%%%%%%%%%%%%%%%%%%%%%%%%%%%%%%%%%%
\begin{figure}
     \centering
      \includegraphics[scale=0.3]{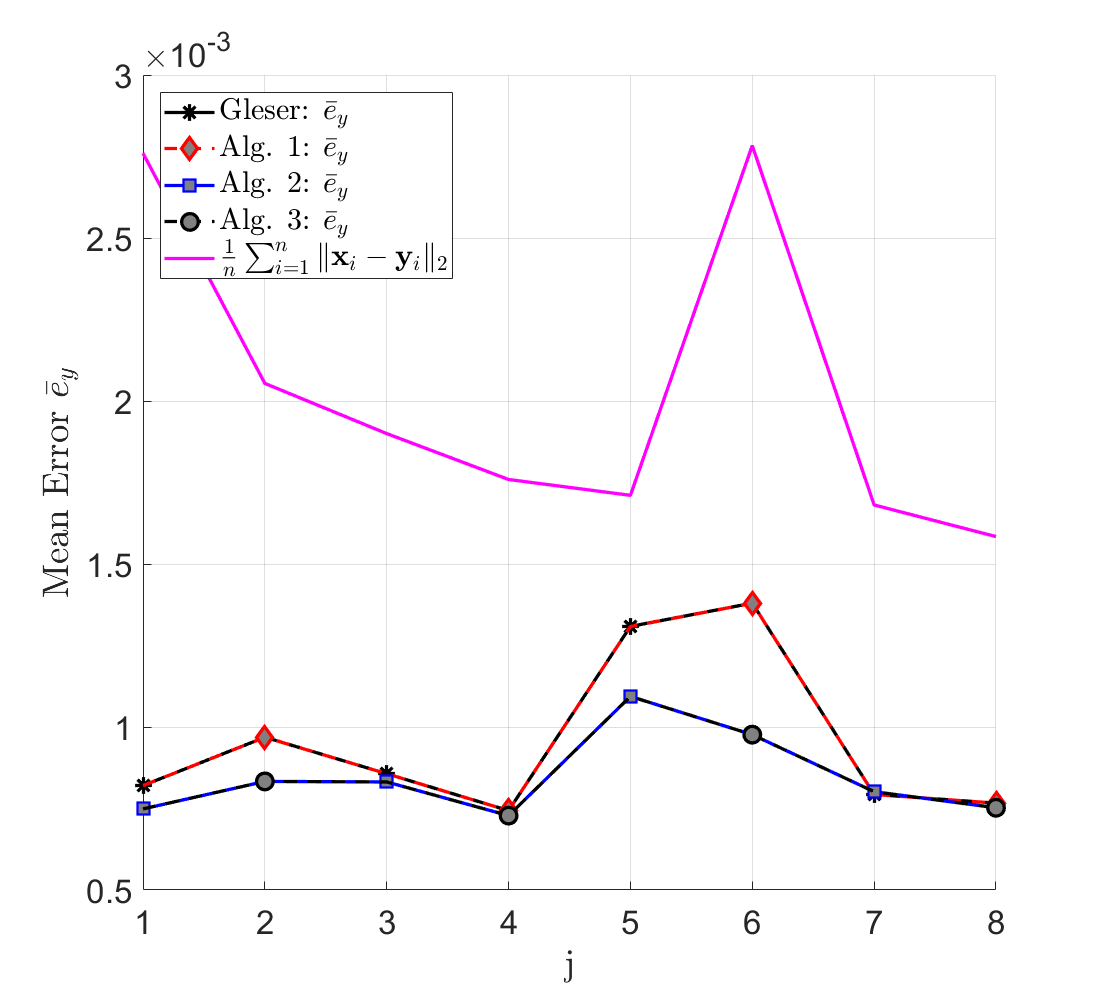}
         \caption{Experiment result, mean error \(\bar{e}_{y}\) for source sensor \(j\) to target sensor \(k\), \(j,k \in \{1,...,8\}\), compared to the distance between normalized source and target data. The peak at sensor~6 is likely due to deviations in the measurement procedure.}
         \label{fig:errorrealpercent}
\end{figure}
%%%%%%%%%%%%%%%%%%%%%%%%%%%%%%%%%%%%%%%%%%%%%%%%%%%%%%%%%%%%%%

The results of the transformation error~\(\bar{e}_{y}\) for the normalized data set of the multi-sensor board are shown in fig.~\ref{fig:errorrealpercent}. The mean error for the~\(n=90\) data samples is calculated by~(\ref{errorRealym})
\begin{equation}\label{errorRealym}
    \bar{e}_{y}(j)= \frac{1}{K} \sum_{k=1}^K \frac{1}{n} \sum_{i=1}^n \lVert (\tilde{\mathbf{A}}_{j,k}{\mathbf{x}_{j}(i)}+\tilde{\mathbf{b}}_{j,k})-\mathbf{y}_{k}(i) \rVert_{2}.
\end{equation}

Since we can use the measured values \(\mathbf{y}_k\), we do not need to calculate them as in eq. \ref{ErrorMCS} by \((\mathbf{A}\boldsymbol{\theta}_i+\mathbf{b})\). Table~\ref{tab:realerrors} shows the normalized mean error \(\bar{e}_{y}\) for each source sensor, with the minimum value \(0.00072817\) resulting 0, and maximum value \(0.0028\) resulting 1. Since we focus on calculating the transformation from \(\mathbf{X}\) to \(\mathbf{Y}\), and do not try to estimate the origins of the data, the performance of Gleser and Watson~is slightly less accurate than the least squares, and the hybrid version delivers similar results as Proposition~\ref{PropGrad}. Our proposed Alg.~\ref{Alg2}, with reduced computational steps and therefore lower energy consumption, gives the best results except source sensor~6. However, this could also be due to inaccuracies in the measurement procedure and needs further investigation. A simple normalization of the data in our experiment leads to at least \(50~\%\) worse results than our proposed method.

%%%%%%%%%%%%%%%%%%%%%%%%%%%%%%%%%%%%%%%%%%%%%%%%%%%%%%%%%%%%%%
\begin{figure}
     \centering
      \includegraphics[scale=0.27]{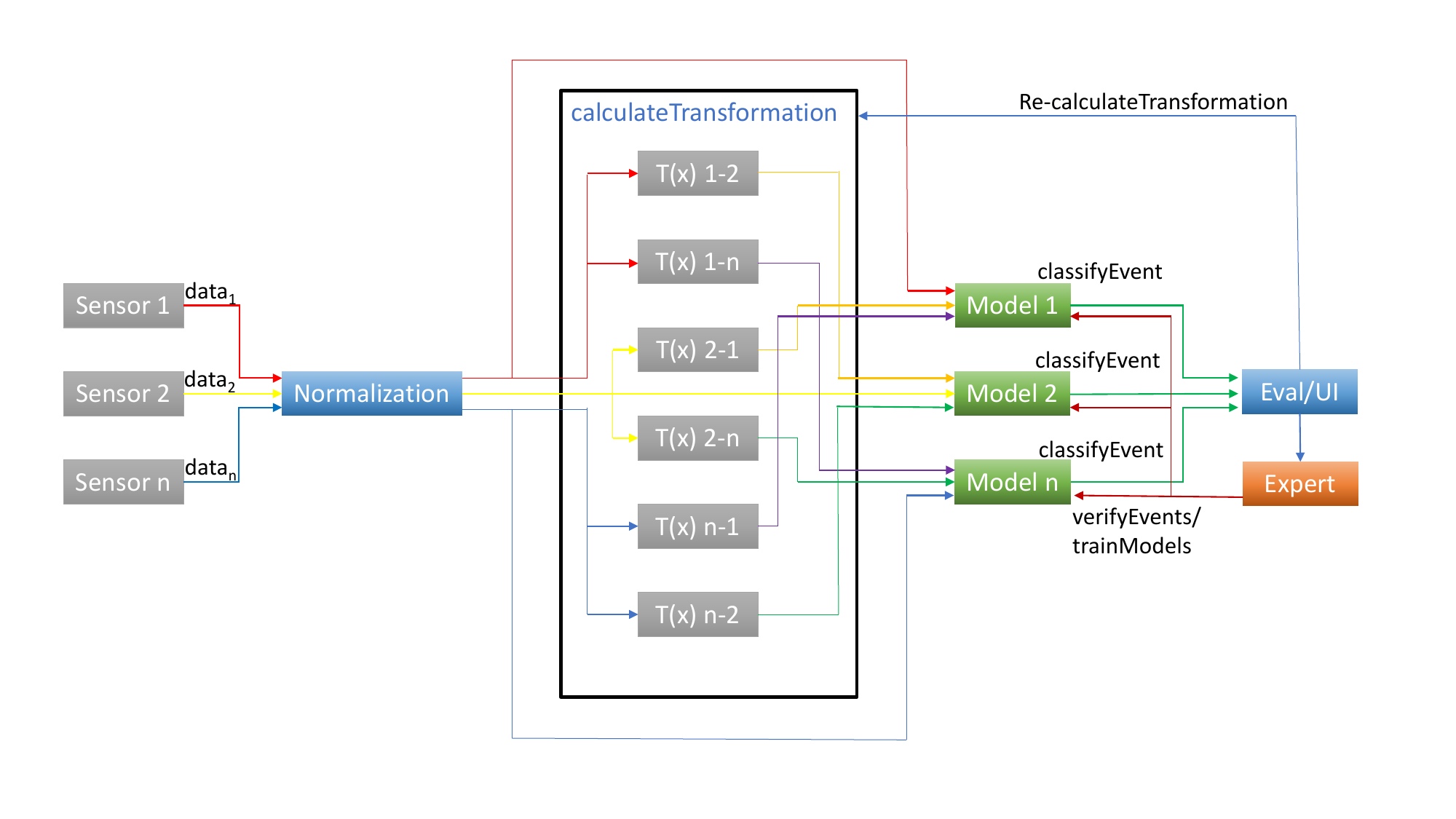}
         \caption{Concept of distributed learning using transformation. Each model learns from each (transformed) input data stream. Experts therefore review and train the estimation of transformation.}
         \label{fig:concept}
\end{figure}
%%%%%%%%%%%%%%%%%%%%%%%%%%%%%%%%%%%%%%%%%%%%%%%%%%%%%%%%%%%%%%
\section{Discussion and Outlook}\label{Discussion}
\noindent In the article, we have shown that the procedures of estimating the 2-dimensional AT can be used for an expert-assisted learning between two systems. The simple direct estimation based on the stationary points of the parameters of the transformation can be combined with the solution for the estimation of the data from Gleser and Watson. This new hybrid solution provides the best estimates for both, simulations and real data. For the presented real data evaluation, we achieve best results with the simple Prop.~\ref{PropGrad}, but we are looking forward to investigate whether we can use the approach of Gleser and Watson~to improve the measurement values of our low-cost sensors in an extended setup. For the real data example, our simple least squares approach already achieves unique solution due to the non singularity in \(\boldsymbol{\Theta}\boldsymbol{\Theta}^T\), and the best accuracy with low computation, which is very promising for further developments in distributed learning. The results still need to be generalized for the m-dimensional case. 

Finally, in further work we will explore if only a part of the measured data is sufficient to give a good estimate of the transformation. The evaluation with multiple transformation can also be done in future work. Based on this, iterative methods can be developed in the future to estimate the transformation between different systems. Fig.~\ref{fig:concept} illustrates the concept of our future work to implement distributed learning based on the creation of similar models using the expert-supported transformation estimation (see the figure in the abstract) between the two data spaces. As to our knowledge, there is no similar approach, although it seems to be a simple way to match distributed systems and their events or anomalies. We plan to implement these concepts as the basis for a more efficient method of knowledge sharing between distributed nodes. We also plan to investigate in more detail why denoising improves the results. In our example, the fixed transformation we have chosen is favorable for Gleser and Watson However, when considering various alternatives for \(\mathbf{A}\), Gleser and Watson~produces notably inferior results.
We encourage readers to experiment with this option and assess its performance in different \(\mathbf{A}\) scenarios using a code provided on GitLab at~\url{https://gitlab.rlp.net/vski-engineering-group/paper/likelihood-based-sensor-calibration-for-expert-supported-distributed-learning-algorithms-in-iot-systems-estimating-affine-transformations/matlabcodeanddata}.

\section{Conclusion}
\noindent We propose an improved solution of Gleser and Watson~by augmentation of the matrix, denoising using the eigenvectors and the extension to an AT. We have shown that both solutions are connected by an eigenvalue decomposition. We further showed that a combination of both solutions provides better estimates for computing the parameters of the AT by both, simulation, and a real experiment. Finally we presented a novel concept for distributed learning using transformation with the support of expert knowledge.

All 3 algorithms offer a good approach to estimate the AT between similar systems. Further investigations should include special methods to fairly compare Gleser and Watson~with our hybrid proposition. The development of a simple but powerful expert-based distributed learning algorithm using ATs appears promising but requires further research in multidimensionality, and an iterative approach. Investigating more powerful transformation for stronger IoT nodes should make sense as well.

%%%%%%%%%%%%%%%%%%%%%%%%%%%%%%%%%%%%%%%%%%%%%%%%%%%%%%%%%%%%%%

\section*{Appendix\\ \quad \quad Proof Prop. \ref{PropGrad}}
\renewcommand{\theequation}{A\arabic{equation}}
\setcounter{equation}{0}
\begin{proof}
    With the help of the gradient in~(\ref{eq119}) based on \cite{IEEEhowto:Petersen} (eq. (119))  
\begin{align}\label{eq119}
    \nabla_{\mathbf{X}}& \mathrm{tr}[(\mathbf{A}\mathbf{X}\mathbf{B}+\mathbf{C})(\mathbf{A}\mathbf{X}\mathbf{B}+\mathbf{C})^T]\\
    &=2\mathbf{A}^T(\mathbf{A}\mathbf{X}\mathbf{B}+\mathbf{C})\mathbf{B}^T,\nonumber
\end{align}
we can get the following gradients over \(\boldsymbol{\Theta}\)
\begin{align*}
   \nabla_{\boldsymbol{\Theta}} \mathrm{tr}[(\mathbf{X}-\boldsymbol{\Theta})(\mathbf{X}-\boldsymbol{\Theta})^T]
   =-2(\mathbf{X}-\boldsymbol{\Theta}) 
\end{align*}
and 
\begin{align*}
   \nabla_{\boldsymbol{\Theta}} \mathrm{tr}[(\mathbf{Y}-\mathbf{B}\boldsymbol{\Theta})(\mathbf{Y}-\mathbf{B}\boldsymbol{\Theta})^T]
   =-2\mathbf{B}^T(\mathbf{Y}-\mathbf{B}\boldsymbol{\Theta}). 
\end{align*}
With the gradients, we get the equation for the stationary points
\begin{equation*}
    \mathbf{0}=-2(\mathbf{X}-\boldsymbol{\Theta}) -2\mathbf{B}^T(\mathbf{Y}-\mathbf{B}\boldsymbol{\Theta}),
\end{equation*}
which results in~(\ref{gradTheta})
\begin{equation}\label{gradTheta}
\boldsymbol{\Theta}=\mathbf{X}+\mathbf{B}^T\mathbf{Y}-\mathbf{B}^T\mathbf{B}\boldsymbol{\Theta}.
\end{equation}
Similarly, we can derive the gradient over \(\mathbf{B}\)
\begin{align*}
   \nabla_{\mathbf{B}} \mathrm{tr}[(\mathbf{Y}-\mathbf{B}\boldsymbol{\Theta})(\mathbf{Y}-\mathbf{B}\boldsymbol{\Theta})^T]
   =-2(\mathbf{Y}-\mathbf{B}\boldsymbol{\Theta})\boldsymbol{\Theta}^T 
\end{align*}
which results in the~(\ref{gradB})
\begin{equation}\label{gradB}
    \mathbf{B}\boldsymbol{\Theta}\boldsymbol{\Theta}^T=\mathbf{Y}\boldsymbol{\Theta}^T.
\end{equation}
Multiplying both sides of~(\ref{gradB}) with \(\boldsymbol{\Theta}\) from right we get
\begin{equation*}
    \mathbf{B}\boldsymbol{\Theta}\boldsymbol{\Theta}^T\boldsymbol{\Theta}=\mathbf{Y}\boldsymbol{\Theta}^T\boldsymbol{\Theta}.
\end{equation*}
Multiplying both sides with \([\boldsymbol{\Theta}^T\boldsymbol{\Theta}]^{-1}\) from the right, we finally get~(\ref{gradB2})
\begin{equation}\label{gradB2}
    \mathbf{Y}=\mathbf{B}\boldsymbol{\Theta}. 
\end{equation}
Inserting~(\ref{gradB2}) in~(\ref{gradTheta}), we get~(\ref{gradB3})
\begin{equation}\label{gradB3}
    \boldsymbol{\Theta}=\mathbf{X}+\mathbf{B}^T\mathbf{B}\boldsymbol{\Theta}-\mathbf{B}^T\mathbf{B}\boldsymbol{\Theta}=\mathbf{X}.
\end{equation}
Multiplying both sides of~(\ref{gradB}) with \([\boldsymbol{\Theta}\boldsymbol{\Theta}^{T}]^{-1}\) from the right, we get result of Proposition~\ref{PropGrad}.
\end{proof}

% \bibliography lines here:

% that's all folks

\vfill

\end{document}